\newcommand{\bb}{\bm \beta}
\newcommand{\mmu}{\bm \mu}
\newtheorem{lem}{Lemma}
\newtheorem{theorem}{Theorem}
\newtheorem{rem}{Remark}
\newtheorem{cor}{Corollary}
\title{The Impact of Regularization on High-dimensional Logistic Regression}
\author{Fariborz Salehi}
\author{Ehsan Abbasi}
\author{Babak Hassibi}
\affil{Department of Electrical Engineering\\
California Institute of Technology\\
Pasadena, CA 91125}
\begin{document}

\maketitle

\begin{abstract}
Logistic regression is commonly used for modeling dichotomous outcomes. In the classical setting, where the number of observations is much larger than the number of parameters, properties of the maximum likelihood estimator in logistic regression are well understood. Recently, Sur and Candes~\cite{sur2018modern} have studied logistic regression in the high-dimensional regime, where the number of observations and parameters are comparable, and show, among other things, that the maximum likelihood estimator is biased. In the high-dimensional regime the underlying parameter vector is often structured (sparse, block-sparse, finite-alphabet, etc.) and so in this paper we study regularized logistic regression (RLR), where a convex regularizer that encourages the desired structure is added to the negative of the log-likelihood function. An advantage of RLR is that it allows parameter recovery even for instances where the (unconstrained) maximum likelihood estimate does not exist. We provide a precise analysis of the performance of RLR via the solution of a system of six nonlinear equations, through which any performance metric of interest (mean, mean-squared error, probability of support recovery, etc.) can be explicitly computed. Our results generalize those of Sur and Candes and we provide a detailed study for the cases of $\ell_2^2$-RLR and sparse ($\ell_1$-regularized) logistic regression. In both cases, we obtain explicit expressions for various performance metrics and can find the values of the regularizer parameter that optimizes the desired performance. The theory is validated by extensive numerical simulations across a range of parameter values and problem instances.
\end{abstract}
\section{Introduction} \label{sec:intro}
Logistic regression is the most commonly used statistical model for predicting dichotomous outcomes~\cite{hosmer2013applied}. It has been extensively  employed in many areas of engineering and applied sciences, such as in the medical~\cite{boyd1987evaluating,tu1996advantages} and social sciences~\cite{king2001logistic}. As an example, in medical studies logistic regression can be used to predict the risk of developing a certain disease (e.g. diabetes) based on a set of observed characteristics from the patient (age, gender, weight, etc.)\\
Linear regression is a very useful  tool for predicting a quantitive response. However, in many situations the response variable is qualitative (or categorical) and linear regression is no longer appropriate~\cite{james2013introduction}. This is mainly due to the fact that least-squares often succeeds under the assumption that the error components are independent with normal distribution. In categorical predictions, however,  the error components are neither inependent nor normally distributed~\cite{nelder1972generalized}.  \\
In logistic regression we model the probability that the label, $Y$, belongs to a certain category. When no prior knowledge is available regarding the structure of the parameters, maximum likelihood is often used for fitting the model. Maximum likelihood estimation (MLE) is a special case of maximum a posteriori estimation (MAP) that assumes a uniform prior distribution on the parameters. \\
In many applications in statistics, machine learning, signal processing, etc., the underlying parameter obeys some sort of \emph{structure} (sparse, group-sparse, low-rank, finite-alphabet, etc.). For instance, in modern applications where the number of features far exceeds the number of observations, one typically enforces the solution to contain only a few non-zero entries. To exploit such structural information,  inspired by the Lasso~\cite{tibshirani1996regression} algorithm for linear models, researchers have studied regularization methods for generalized linear models~\cite{shevade2003simple, friedman2010regularization}. From a statistical viewpoint, adding a regularization term provides a MAP estimate with a non-uniform prior distribution that has higher densities in the set of structured solutions.
%\vspace{-0.3em}
\subsection{Prior work}
%\vspace{-0.3em}
Classical results in logistic regression mainly concern the regime where the sample size, $n$, is overwhelmingly larger than the feature dimension $p$. It can be shown that in the limit of large samples when $p$ is fixed and $n\rightarrow \infty$, the maximum likelihood estimator provides an efficient estimate of the underlying parameter, i.e., an unbiased estimate with covariance matrix approaching the inverse of the the Fisher information~\cite{van2000asymptotic, lehmann2006testing}. However, in most modern applications in data science, the datasets often have a huge number of features, and therefore, the assumption $\frac{n}{p}\gg 1$ is not valid. Sur and Candes~\cite{candes2018phase, sur2018modern, sur2017likelihood} have recently studied the performance of the maximum likeliood estimator for logistic regression in the  regime where $n$ is proportional to $p$. Their findings challenge the conventional wisdom, as they have shown that in the linear asymptotic regime the maximum likelikehood estimate is not even unbiased. Their analysis provides the precise performance of the maximum likelihood estimator. \\
There have been many studies in the literature on the performance of regularized (penalized) logistic regression, where a regularizer is added to the negative log-likelihood function (a partial list includes~\cite{bunea2008honest, kakade2010learning, van2008high}). These studies often require the underlying parameter to be heavily structured. For example, if the parameters are sparse the sparsity is taken to be $o(p)$. Furthermore, they provide orderwise bounds on the performance but do not give a precise characterization of the quality of the resulting estimate. A major advantage of adding a regularization term is that it allows for recovery of the parameter vector even in regimes where the maximum likelihood estimate does not exist (due to an insufficient number of observations.) 
%%\vspace{-0.3em}
\subsection{Summary of contributions}
%\vspace{-0.3em}
In this paper, we study regularized logistic regression (RLR) for parameter estimation in high-dimensional logistic models. Inspired by recent advances in the performance analysis of M-estimators for linear models~\cite{donoho2016high, el2013robust, thrampoulidis2018precise}, we precisely characterize the assymptotic performance of the RLR estimate. Our characterization is through a system of six nonlinear equations in six unknowns, through whose solution all locally-Lipschitz performance measures such as the mean, mean-squared error, probability of support recovery, etc., can be determined. In the special case when the regularization term is absent, our $6$ nonlinear equations reduce to the $3$ nonlinear equations reported in~\cite{sur2018modern}. When the regularizer is quadratic in parameters, the $6$ equations also simplifies to $3$. When the regularizer is the $\ell_1$ norm, which corresponds to the popular sparse logistic regression~\cite{koh2007interior, krishnapuram2005sparse}, our equations can be expressed in terms of $q$-functions, and quantities such as the probability of correct support recovery can be explicitly computed. Numerous numerical simulations validate the theoretical findings across a range of problem settings. To the extent of our knowledge, this is the first work that precisely characterizes the performance of the regularized logistic regression in high dimensions.\\
%for which When the response variable is binary, the linear logistic regression model is often used. Rather than modeling the response directly, logistic regression models the probability that the label, $y$, belpongs to a certain category. Linear Regression is not appropriate in the case of a qualitative response~\cite{james2013introduction}.\\
For our analysis, we utilize the recently developed {\bf{C}}onvex {\bf{G}}aussian {\bf{M}}in-max {\bf{T}}heorem ({\bf{CGMT}})~\cite{thrampoulidis2015regularized} which is a strengthened version of a classical Gaussian comparison inequality due to Gordon~\cite{gordon1985some}, and whose origins are in~\cite{stojnic2013framework}. Previously, the CGMT has been successfully applied to derive the precise performance in a number of applications such as regularized M-estimators~\cite{thrampoulidis2018precise}, analysis of the generalized lasso~\cite{miolane2018distribution, thrampoulidis2015regularized}, data detection in massive MIMO~\cite{abbasi2019performance, atitallah2017ber, thrampoulidis2019simple},  and PhaseMax in phase retrieval~\cite{dhifallah2018phase, salehi2018precise, salehi2018learning}.
%Previously, the CGMT have been successfully applied to derive the precise performance of a number of applications such as~\cite{}\\
%\subsection{Future work}
%Multi-class Logistic regression models. Finding the region for which the maximum-likelihood estimator exists.
\section{Preliminaries}
\subsection{Notations}
%\vspace{-0.3em}
We gather here the basic notations that are used throughout
this paper. $\mathcal N (\mu, \sigma^2)$ denotes the normal distribution with mean $\mu$ and variance $\sigma^2$. $X \sim p_X$ implies that the random variable $X$ has a density $p_X$. $\overset{\text{P}}\longrightarrow$ and $\overset{\text{d}}\longrightarrow$ represent convergence in probability and in distribution, respectively. Lower letters are reserved for vectors and upper letters are for matrices. $\mathbf 1_d$, and $\mathbf I_d$ respectively denote the all-one vector and the identity matrix in dimension $d$. For a vector $\mathbf v$, $v_i$ denotes its $i^{\text{th}}$ entry, and $||\mathbf v||_p$ (for $p\geq 1$),  is its $\ell_p$ norm, where we  remove the subscript when $p=2$. A function $f:\mathbb R^p\rightarrow \mathbb R$ is called \emph{(invariantly) separable} if $f(\mathbf w) = \sum_{i=1}^p \tilde{f}(w_i)$ for all $\mathbf w\in R^p$, where $\tilde f(\cdot)$ is a real-valued function. %$f'(\cdot)$ and $f''(\cdot)$ represent the first-order and the second-order derivatives of the differentiable function $f(\cdot)$. 
 For a function $\Phi:\mathbb R^d\rightarrow \mathbb R$, the Moreau envelope associated with $\Phi(\cdot)$ is defined as,
\begin{equation}
\label{eq:Moreau}
M_{\Phi}(\mathbf v, t) = \min_{\mathbf x\in \mathbb R^d}~~\frac{1}{2t}||\mathbf v-\mathbf x||^2+\Phi(\mathbf x)~,
\end{equation}
and the proximal operator is the solution to this optimization, i.e.,
\begin{equation}
\text{Prox}_{t\Phi(\cdot)}(\mathbf v) = \arg\min_{\mathbf x\in \mathbb R^d}~~\frac{1}{2t}||\mathbf v-\mathbf x||^2+\Phi(\mathbf x)~.
\end{equation}
%A function $\Phi:\mathbb R^d\rightarrow \mathbb R$ is said to be \emph{locally-Lipschitz} if,
%\begin{equation}
%    \forall M>0,~\exists L_M\geq 0,~\text{such that}~~\forall \mathbf x, \mathbf y \in \big[-M,+M\big]^d~:~~~|\Phi(\mathbf x) -  \Phi(\mathbf y)|\leq L_{M}||\mathbf x-\mathbf y||~. 
%\end{equation}
%$\kappa:=\frac{||\bb^{*}||}{\sqrt p}$, $\delta:=\frac{n}{p}$
%\vspace{-2em}
\subsection{Mathematical Setup}
%\vspace{-0.4em}
Assume we have $n$ samples from a logistic model with parameter $\bb^{*}\in \mathbb R^p$. Let $\mathcal D=\{(\mathbf x_i, y_i)\}_{i=1}^{n}$ denote the set of samples (a.k.a. the training data), where for $i=1,2,\ldots,n$, $\mathbf x_i\in \mathbb R^p$ is the feature vector and the label $y_i\in\{0,1\}$ is a Bernouli random variable with,
%\vspace{-0.5em}
\begin{equation}
\mathbb P[y_i = 1|\mathbf x_i] = \mathbb \rho'(\mathbf x_i^T \bb^{*})~,~~\text{for }~i=1,2,\ldots,n~,
\end{equation} 
%\vspace{-0.6em}
 where $\rho'(t) := \frac{e^t}{1+e^t}$ is the standard logistic function. The goal is to compute an estimate for $\bb^{*}$ from the training data $\mathcal D$. The maximum likelihood estimator, $\hat\bb_{ML}$, is defined as,
\begin{equation}
\begin{aligned}
&&\hat\bb_{ML} = \arg\max_{\bb\in \mathbb R^p} ~\overset{n}{\underset{i=1}\prod}~\mathbb P_{\bb}(y_i|\mathbf x_i)&=\arg\max_{\bb\in \mathbb R^p} ~\overset{n}{\underset{i=1}\prod}~\frac{e^{y_i(\mathbf x_i^T\bb)}}{1+e^{\mathbf x_i^T\bb}}\\
&&&=\arg\min_{\bb\in \mathbb R^p} ~\overset{n}{\underset{i=1}\sum}~\rho(\mathbf x_i^T \bb) - y_i (\mathbf x_i^T\bb)~.
\end{aligned}
\end{equation}
%\begin{equation}
%\begin{aligned}
%&&\hat\bb_{ML} = \arg\max_{\bb\in \mathbb R^p} ~\overset{n}{\underset{i=1}\prod}~\mathbb P_{\bb}(y_i|\mathbf x_i)&=\arg\max_{\bb\in \mathbb R^p} ~\overset{n}{\underset{i=1}\prod}~\frac{e^{y_i(\mathbf x_i^T\bb)}}{1+e^{\mathbf x_i^T\bb}}~,\\
%&&&=\arg\min_{\bb\in \mathbb R^p} ~\overset{n}{\underset{i=1}\sum}~\rho(\mathbf x_i^T \bb) - y_i (\mathbf x_i^T\bb)~.
%\end{aligned}
%\end{equation}
% Why does the study important in high-dimensions? %What did Candes do? %Why do we need regularization?
Where $\rho(t) := \log(1+e^t)$ is the \emph{link function} which has the standard logistic function as its derivative. The last optimization is simply minimization over the negative log-likelihood. This is a convex optimization program as the log-likelihood is concave with respect to $\bb$. \\
As explained earlier in Section~\ref{sec:intro}, in many interesting settings the underlying parameter possesses cerain structure(s) (sparse, low-rank, finite-alphabet, etc.). In order to exploit this structure  we assume $f:\mathbb R^p \rightarrow \mathbb R$ is a \emph{convex} function that measures the (so-called) "complexity" of the structured solution. We fit this model by the regularized maximum (binomial) likelihood defined as follows,
%have some assumption on the structure of $\bb^{*}$. As an example, one might expect that the labels  do not rely on many of the represented features, i.e., $\bb^{*}$ has many zero entries. To exploit such prior information on the structure of $\bb^{*}$ we are introducing a regularization term to the objective function. Let $f:\mathbb R^p\rightarrow \mathbb R$ be a convex function that enforces structure. For instance, $f(\cdot)=||\cdot||_1$ enforces sparsity. We are interested in solving the following optimization program,
\begin{equation}
\label{eq:regularizedopt}
\hat \bb = \arg\min_{\bb\in \mathbb R^p}~\frac{1}{n}\cdot\big[\overset{n}{\underset{i=1}\sum}~\rho(\mathbf x_i^T \bb) - y_i (\mathbf x_i^T\bb)\big] + \frac{\lambda}{p}~ f(\bb) ~.
\end{equation}
Here, $\lambda\in \mathbb R_{+}$ is the regularization parameter that must be tuned properly. In this paper, we study the linear asymptotic regime in which the problem dimensions $p,n$ grow to infinity at a  proportional rate, $\delta:=\frac{n}{p}>0$. Our main result characterizes the performance of $\hat \bb$ in terms of the ratio, $\delta$, and the signal strength, $\kappa = \frac{||\bb^{*}||}{\sqrt{p}}$ . For our analysis we assume that the regularizer $f(\cdot)$ is separable, $f(\mathbf w)=\sum_i\tilde f(w_i)$, and the data points are drawn independently from the Gaussian distribution, $\{ \mathbf x_i\}_{i=1}^{n}\overset{\text{i.i.d.}}\sim\mathcal N(\mathbf 0, \frac{1}{p}\mathbf I_p)$. We further assume that the entries of $\bb^{*}$ are drawn from a distribution $\Pi$. Our main result characterizes the performance of the resulting estimator through the solution of a system of six nonlinear equations with six unknowns.  In particular, we use the solution to  compute some common descriptive statistics of the estimate, such as the mean and the variance.   
%We are interested in analyzing the optimization program~\eqref{eq:regularizedopt}, in high dimensions. We assume $p,n\rightarrow \infty$ at a fixed ratio $\delta:=n/p\in (1,+\infty)$.
\section{Main Results}
%\vspace{-0.5em}
\label{sec:main}
In this section, we present the main result of the paper, that is the characterization of the asymptotic performance of regularized logistic regression (RLR). When the estimation performance is measured via a locally- Lipschitz function (e.g. mean-squared error), Theorem~\ref{thm:main} precisely predicts the asymptotic behavior of the error. The derived expression captures the role of the regularizer, $f(\cdot)$, and the particular distribution of $\bb^*$, through a set of scalars derived by solving a system of nonlinear equations. In Section~\ref{sec:nonlinsys} we present this system of nonlinear equations along with some insights on how to numerically compute its solution. After formally stating our result in Section~\ref{sec:main_thm}, we use that to predict the general behavior of $\hat \bb$. In particular, in Section~\ref{sec:bias_var} we compute its correlation with the true signal as well as its mean-squared error. 
%that are  on the solution of a system of nonlinear equations with $6$ unknowns. 
%\vspace{-0.8em}
\subsection{A nonlinear system of equations} 
\label{sec:nonlinsys}
%\vspace{-0.4em}
As we will see in Theorem~\ref{thm:main}, given the signal strength $\kappa$, and the ratio $\delta$, the asymptotic performance of RLR is characterized by the solution to the following system of nonlinear equations with six unknowns $(\alpha, \sigma, \gamma, \theta, \tau, r)$.
%\bf{Assumption 1.} As $n\rightarrow \infty$, we assume that the
%\begin{equation}
%\label{eq:nonlinsys}
%\begin{cases}
%\begin{aligned}
%&&\kappa^2 \alpha &= \frac{1}{p}~\mathbb %E\big[\bb^{*}^T~\text{Prox}_{\lambda\sigma\tau %f(\cdot)}\big(\sigma\tau(\frac{r}{\sqrt{\delta}}\mathbf g+\theta %\bb^{*})\big)\big]\\
%&&\gamma &= \frac{1}{p}~\mathbb E\big[\frac{1}{r\sqrt \delta}\mathbf %g^T~\text{Prox}_{\lambda\sigma\tau %f(\cdot)}\big(\sigma\tau(\frac{r}{\sqrt{\delta}}\mathbf g+\theta %\bb^{*})\big)\big]\\
%&&\kappa^2\alpha^2 + \sigma^2 &= \frac{1}{p}~ \mathbb %E~||\text{Prox}_{\lambda\sigma\tau %f(\cdot)}\big(\sigma\tau(\frac{r}{\sqrt{\delta}}\mathbf g+\theta %\bb^{*})\big)||^2\\
%&&\theta\gamma&=-2~\mathbb E\big[ \rho''(-\kappa Z_1)\text{Prox}_{\gamma %\rho(\cdot)}\big(\kappa \alpha Z_1+\sigma Z_2\big)\big]\\
%&&\gamma^2 &= \frac{2}{r^2}~\mathbb E\big[\rho'(-\kappa %Z_1)\big(\kappa\alpha Z_1+\sigma Z_2 %-\text{Prox}_{\gamma\rho(\cdot)}(\kappa\alpha Z_1+\sigma Z_2) %\big)^2\big]\\
%&&1-\frac{\gamma}{\sigma\tau}&=~\mathbb E\big[\frac{2\rho'(-\kappa %Z_1)}{1+\gamma \rho''\big(\text{Prox}_{\gamma\rho(\cdot)}(\kappa\alpha %Z_1 + \sigma Z_2\big)}\big]~.
%\end{aligned}
%\end{cases}
%\end{equation}
\begin{equation}
\label{eq:nonlinsys}
\begin{cases}
\begin{aligned}
&&\kappa^2 \alpha &= ~\mathbb E\big[\beta~\text{Prox}_{\lambda\sigma\tau \tilde f(\cdot)}\big(\sigma\tau(\theta \beta+\frac{r}{\sqrt{\delta}}Z)\big)\big]~,\\
&&\gamma &= \frac{1}{r\sqrt{\delta}}~\mathbb E\big[Z~\text{Prox}_{\lambda\sigma\tau \tilde f(\cdot)}\big(\sigma\tau(\theta \beta+\frac{r}{\sqrt{\delta}}Z)\big)\big]~,\\
&&\kappa^2\alpha^2 + \sigma^2 &= ~ \mathbb E~\big[\text{Prox}_{\lambda\sigma\tau \tilde f(\cdot)}\big(\sigma\tau(\theta \beta+\frac{r}{\sqrt{\delta}}Z)\big)^2\big]~,\\
&&\gamma^2 &= \frac{2}{r^2}~\mathbb E\big[\rho'(-\kappa Z_1)\big(\kappa\alpha Z_1+\sigma Z_2 -\text{Prox}_{\gamma\rho(\cdot)}(\kappa\alpha Z_1+\sigma Z_2) \big)^2\big]~,\\
&&\theta\gamma&=-2~\mathbb E\big[ \rho''(-\kappa Z_1)\text{Prox}_{\gamma \rho(\cdot)}\big(\kappa \alpha Z_1+\sigma Z_2\big)\big]~,\\
&&1-\frac{\gamma}{\sigma\tau}&=~\mathbb E\big[\frac{2\rho'(-\kappa Z_1)}{1+\gamma \rho''\big(\text{Prox}_{\gamma\rho(\cdot)}(\kappa\alpha Z_1 + \sigma Z_2)\big)}\big]~.
\end{aligned}
\end{cases}
\end{equation}

Here $Z,Z_1,Z_2$ are standard normal variables, and $\beta\sim \Pi$, where $\Pi$ denotes the distribution on the entries of $\bb^{*}$. The following remarks provide some insights on solving the nonlinear system. 
\begin{rem}[Proximal Operators]\label{rem:Nonlinear1}
 It is worth noting that the equations in~\eqref{eq:nonlinsys} include the expectation of functionals of two proximal operators. The first three equations are in terms of  $\text{Prox}_{\tilde f(\cdot)}$, which can be computed explicitly for most widely used regularizers. For instance, in $\ell_1$-regularization, the proximal operator is the well-known shrinkage function defined as $\eta(x,t) := \frac{x}{|x|}(|x|-t)_+$. The remaining equations depend on computing the proximal operator of the link function $\rho(\cdot)$. For $x\in \mathbb R$, $\text{Prox}_{t\rho(\cdot)}(x)$ is the unique solution of $z+t\rho'(z) = x$.
\end{rem}
\begin{rem}[Numerical Evaluation]
\label{rem:Nonlinear2}
Define $\mathbf v := [\alpha, \sigma, \gamma, \theta, \tau, r]^T$ as the vector of unknonws. The nonlinear system~\eqref{eq:nonlinsys} can be reformulated as $\mathbf v = S(\mathbf v)$ for a properly defined $S:\mathbb R^6\rightarrow \mathbb R^6$. We have empirically observed in our numerical simulations that a fixed-point iterative method, $\mathbf v_{t+1}=S(\mathbf v_t)$, converges to $\mathbf v^*$, such that $\mathbf v^* = S(\mathbf v^*)$.
\end{rem}
%\vspace{-0.2em}
\subsection{Asymptotic performance of regularized logistic regression}
\label{sec:main_thm}
%\vspace{-0.2em}
We are now able to present our main result. Theorem~\ref{thm:main} below describes the average behavior of the entries of $\hat \bb$, the solution of the  RLR. The derived expression is in terms of the solution of the nonlinear system~\eqref{eq:nonlinsys}, denoted by $(\bar \alpha, \bar \sigma, \bar \gamma, \bar \theta, \bar \tau, \bar r)$.  An {\underline{informal}} statement of our result is that as $n\rightarrow \infty$, the entries of $\hat \bb$ converge as follows,
\begin{equation}
    \hat\bb_j \overset{d}{\rightarrow} \Gamma(\bb_j^*,Z)~,~~\text{ for }~j=1,2,\ldots,p~, 
\end{equation}
where $Z$ is a standard normal random variable, and $\Gamma:\mathbb R^2\rightarrow \mathbb R$ is defined as,
\begin{equation}
\label{eq:Gamma_def}
    \Gamma(c,d) := \text{Prox}_{\lambda \bar\sigma \bar \tau\tilde f(\cdot)}\big(\bar\sigma\bar\tau(\bar \theta c+\frac{\bar r}{\sqrt{\delta}}d)\big)~. 
\end{equation}
In other words, the RLR solution has the same behavior as applying the proximal operator on the "perturbed signal", i.e., the true signal added with a Gaussian noise.
\begin{theorem}
\label{thm:main}
Consider the optimization program~\eqref{eq:regularizedopt}, where for $i=1,2,\ldots, n$, $\mathbf x_i$ has the multivariate Gaussian distribution $\mathcal N(0,\frac{1}{p}\mathbf I_p)$, and $y_i =  Ber(\mathbf x_i^T\bb^{*})$, and the entries of $\bb^{*}$ are drawn independently from a distribution $\Pi$. Assume the parameters $\delta$, $\kappa$, and $\lambda$ are such that the nonlinear system~\eqref{eq:nonlinsys} has a unique solution $(\bar\alpha, \bar \sigma, \bar \gamma, \bar \theta, \bar \tau, \bar r)$. Then, as $p\rightarrow \infty$, for any locally-Lipschitz\footnote{A function $\Phi:\mathbb R^d\rightarrow \mathbb R$ is said to be \emph{locally-Lipschitz} if,
\begin{equation*}
    \forall M>0,~\exists L_M\geq 0,~\text{such that}~~\forall \mathbf x, \mathbf y \in \big[-M,+M\big]^d~:~~~|\Phi(\mathbf x) -  \Phi(\mathbf y)|\leq L_{M}||\mathbf x-\mathbf y||~. 
\end{equation*}} function $\Psi:\mathbb R\times \mathbb R\rightarrow \mathbb R$ , we have,
\begin{equation}
    \frac{1}{p}\sum_{j=1}^p \Psi(\hat \bb_j, {\bb}_j^*) \overset{\text{P}}\longrightarrow \mathbb E\big[\Psi\big(\Gamma(\beta,Z),\beta\big)\big]~,
\end{equation}
where $Z\sim \mathcal N(0,1)$, $\beta\sim \Pi$ is independent of $Z$, and the function $\Gamma(\cdot,\cdot)$ is defined in~\eqref{eq:Gamma_def}.
%\begin{equation}
%    \Gamma(c,d) = \text{Prox}_{\lambda \bar\sigma \bar \tau\tilde f(\cdot)}\big(\bar\sigma\bar\tau(\frac{\bar r}{\sqrt{\delta}}c+\bar \theta d)\big)~. 
%\end{equation}
\end{theorem}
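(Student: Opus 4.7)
The approach I would take is the standard CGMT pipeline refined for the logistic loss, adapted from the M-estimator analysis of Thrampoulidis--Abbasi--Hassibi but with two proximal operators appearing (one for $\rho$ and one for $\tilde f$) rather than one. The overall plan has four stages: (i) reformulate the RLR program as a convex--concave saddle problem in which the Gaussian design matrix $\mathbf X$ appears only bilinearly, (ii) apply the CGMT to pass to an auxiliary optimization (AO) in which $\mathbf X$ is replaced by two independent Gaussian vectors, (iii) reduce the AO by concentration to a deterministic scalar min--max problem whose first--order optimality conditions are precisely the six equations in~\eqref{eq:nonlinsys}, and (iv) transfer pointwise control of $\hat\bb$ from the AO back to the primary optimization (PO).

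\paragraph{Stage 1: saddle-point reformulation.} Writing $\ell(v,y):=\rho(v)-yv$, introduce the linear constraint $\mathbf v=\mathbf X\bb$ and dualize to obtain
\begin{equation*}
\min_{\bb,\mathbf v}\max_{\mathbf u}\;\;\frac{1}{n}\sum_{i=1}^n \ell(v_i,y_i)+\frac{\lambda}{p}f(\bb)+\frac{1}{n}\mathbf u^T(\mathbf X\bb-\mathbf v).
\end{equation*}
Since the labels $y_i$ depend on $\mathbf X$ only through $\mathbf s:=\mathbf X\bb^{*}$, I would split $\bb=\mu\,\bb^{*}/\|\bb^{*}\|+P^{\perp}\bb$ and write $\mathbf X\bb=\mu\,\mathbf s/\|\bb^{*}\|+\tilde{\mathbf X}\bb$, where $\tilde{\mathbf X}:=\mathbf X P^{\perp}$ is Gaussian and \emph{independent} of $\mathbf s$ (hence of $y$). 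The conditional law of $\tilde{\mathbf X}$ given $(\mathbf s,y)$ has i.i.d.\ $\mathcal N(0,1/p)$ entries on the orthogonal complement of $\bb^{*}$, so the bilinear term $\mathbf u^T\tilde{\mathbf X}\bb/n$ is exactly the object to which CGMT applies.

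\paragraph{Stage 2: applying CGMT.} By the CGMT of Thrampoulidis--Oymak--Hassibi, the min--max value and, crucially, the minimizer's statistics concentrate to those of the AO in which $\mathbf u^T\tilde{\mathbf X}\bb$ is replaced by $\|\bb_{\perp}\|\,\mathbf g^T\mathbf u+\|\mathbf u\|\,\mathbf h^T\bb_{\perp}$ for independent Gaussians $\mathbf g\in\mathbb R^n$, $\mathbf h\in\mathbb R^p$. Fixing the scalar parameters $\alpha:=\langle\bb,\bb^{*}\rangle/\|\bb^{*}\|^2$ and $\sigma^2:=\|\bb_{\perp}\|^2/p$, the minimization over the direction of $\bb_{\perp}$ becomes separable in its coordinates and its value is the Moreau envelope
\begin{equation*}
\min_{\bb_{\perp}}\Big\{\tfrac{\|\mathbf u\|}{\sqrt p}\mathbf h^T\bb_{\perp}+\tfrac{\lambda}{p}f(\bb)\Big\}=\frac{1}{p}\sum_{j=1}^p M_{\lambda\tilde f}\!\big(\alpha\bb^{*}_j-\tfrac{\|\mathbf u\|}{\sqrt{np}}h_j,\lambda\big)+\text{const},
\end{equation*}
whose minimizer is precisely the proximal operator that appears inside $\Gamma$ in~\eqref{eq:Gamma_def}. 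Symmetrically, carrying out the inner maximization over $\mathbf u$ and the minimization over $\mathbf v$ produces Moreau envelopes of $\rho$, introducing $\text{Prox}_{\gamma\rho}$ via $\mathbf s$ and $\mathbf g$.

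\paragraph{Stage 3: scalarization and the six equations.} After these two Moreau reductions, the AO depends on $(\mathbf g,\mathbf h,\mathbf s,y)$ only through empirical averages. By the (strong) law of large numbers these averages concentrate on Gaussian expectations involving the independent variables $Z,Z_1,Z_2\sim\mathcal N(0,1)$ and $\beta\sim\Pi$, producing a deterministic scalar min--max program in six unknowns $(\alpha,\sigma,\gamma,\theta,\tau,r)$: three ($\alpha,\sigma,\gamma$) arising from the norms and inner products introduced by CGMT, and three Lagrange/scaling parameters $(\theta,\tau,r)$ that appear when we homogenize the Moreau envelopes via square-completion (e.g.\ $\|\mathbf u\|=\gamma\sqrt n$ with conjugate $r$, and $\|\bb_\perp\|=\sigma\sqrt p$ with conjugate $\tau$). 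Writing the Karush--Kuhn--Tucker conditions of this scalar program, differentiating under the expectation (which is legitimate by the locally-Lipschitz dependence on the scalars) and using the standard identity $\frac{\partial}{\partial v}\text{Prox}_{t\Phi}(v)=1/(1+t\Phi''(\text{Prox}_{t\Phi}(v)))$ for the last equation, yields exactly the system~\eqref{eq:nonlinsys}.

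\paragraph{Stage 4: transfer to the primary optimization.} By the uniqueness hypothesis on the scalar fixed point and strict convexity of the RLR objective (so local minimizers of the AO correspond to local minimizers of the PO), the CGMT's pointwise-convergence version implies that for any locally-Lipschitz $\Psi$, the empirical average $\frac{1}{p}\sum_j\Psi(\hat\bb_j,\bb^{*}_j)$ is within $o_P(1)$ of its AO analogue. Since in the AO the $j$-th coordinate of the minimizer is $\text{Prox}_{\lambda\bar\sigma\bar\tau\tilde f}\!\big(\bar\sigma\bar\tau(\bar\theta\bb^{*}_j+\bar r h_j/\sqrt\delta)\big)=\Gamma(\bb^{*}_j,h_j)$ with $h_j\sim\mathcal N(0,1)$, the conclusion of the theorem follows from a standard empirical-process/Wasserstein argument combined with the law of large numbers.

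\paragraph{Main obstacle.} The bookkeeping in Stage 2--3 is the delicate part: one must simultaneously scalarize \emph{two} Moreau envelopes (for $\tilde f$ and for $\rho$) and keep track of the correct dual pairings, being careful that the $y_i$ are Bernoulli rather than Gaussian so that the reduction must be done conditionally on $(\mathbf s,y)$. The other subtle point is justifying the transfer step when $f$ is only convex (not strongly convex) and when the MLE may fail to exist in the unregularized limit; here one leans on the strict convexity provided by $\rho$ on the range of $\mathbf X\bb$ together with the uniqueness of the scalar fixed point to rule out non-uniqueness pathologies.
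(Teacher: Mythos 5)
Your plan is essentially the paper's proof: dualize the loss to expose the bilinear form, project out the $\bb^*$ direction so the labels decouple from the Gaussian part fed to the CGMT, scalarize via the two Moreau envelopes (of $\lambda\tilde f$ and of $\rho$) after square-completion with conjugate variables, pass to expectations by the LLN, read off the six equations from the first-order conditions, and transfer back to the PO via the deviation part of the CGMT. The only blemish is the Stage-2 display, where the minimization over $\bb_\perp$ is written without the quadratic term (so it is not yet a Moreau envelope and its arguments are off), but your Stage-3 remark on homogenizing via $\tau$ and $r$ supplies exactly the missing step that the paper carries out explicitly.
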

We defer the detailed proof to the Appendix. In short, to show this result we first represnt the optimization as a bilinear form $\mathbf u^T\mathbf X\mathbf v$, where $\mathbf X$ is the measurement matrix. Applying the CGMT to derive an equivalent optimization, we then simplify this optimization to obtain an unconstrained optimization with six scalar variables. The nonlinear system~\eqref{eq:nonlinsys} represents the first-order optimality condition of the resulting scalar optimization.\\
Before stating the consequences of this result, a few remarks are in order.
\begin{rem}[Assumptions]\label{rem:main1} The assumptions in Theorem~\ref{thm:main} are chosen in a conservative manner. In particular, we could relax the separability condition on $f(\cdot)$, to some milder condition in terms of asymptotic convergence of its proximal operator. Furthermore, one can relax the assumption on the entries of $\bb^*$ being i.i.d. to a weaker assumption on  the empirical distribution of its entries. However, for the applications of this paper, the theorem in its current form is adequate.
\end{rem}
\begin{rem}[Choosing $\Psi$] \label{rem:main2}
The performance measure in Theorem~\ref{thm:main} is computed in terms of evaluation of a locally-Lipschitz function, $\Psi(\cdot,\cdot)$ . As an example, $\Psi(u,v)=(u-v)^2$ can be used to compute the mean-squared error. Later on, we will appeal to this theorem with various choices of $\Psi$ to evaluate different performance measures on $\hat \bb$. 
\end{rem}
%\vspace{-0.3em}
\subsection{Correlation and variance of the RLR estimate}
\label{sec:bias_var}
%\vspace{-0.3em}
As the first application of Theorem~\ref{thm:main} we compute common descriptive statistics of the estimate $\hat \bb$.  In the following corollaries, we establish that the parametrs $\bar \alpha$, and $\bar \sigma$ in~\eqref{eq:nonlinsys} correspond to the correlation and the mean-squared error of the resulting estimate.
\begin{cor}
\label{cor:bias}
As $p\rightarrow \infty$, $\frac{1}{||\bb^*||^2}~\hat\bb^T\bb^* \overset{P}\longrightarrow \bar \alpha$~.
\end{cor}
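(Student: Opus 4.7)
The plan is to apply Theorem~\ref{thm:main} with the bilinear choice $\Psi(u,v)=uv$. This $\Psi$ is a polynomial, hence locally Lipschitz on $\mathbb{R}^2$ (on any box $[-M,M]^2$ its gradient is bounded by $M$), so it satisfies the hypothesis of the theorem. Applying the theorem gives
\begin{equation*}
\frac{1}{p}\,\hat{\bb}^{T}\bb^{*} \;=\; \frac{1}{p}\sum_{j=1}^{p}\hat{\bb}_{j}\,\bb_{j}^{*} \;\overset{P}{\longrightarrow}\; \mathbb{E}\big[\beta\,\Gamma(\beta,Z)\big],
\end{equation*}
where $\beta\sim\Pi$ is independent of $Z\sim\mathcal{N}(0,1)$.

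Next, I would use the first equation of the nonlinear system~\eqref{eq:nonlinsys} evaluated at the fixed point $(\bar\alpha,\bar\sigma,\bar\gamma,\bar\theta,\bar\tau,\bar r)$. By the definition of $\Gamma(\cdot,\cdot)$ in~\eqref{eq:Gamma_def},
\begin{equation*}
\mathbb{E}\big[\beta\,\Gamma(\beta,Z)\big] \;=\; \mathbb{E}\Big[\beta\cdot \text{Prox}_{\lambda\bar\sigma\bar\tau\tilde{f}(\cdot)}\big(\bar\sigma\bar\tau(\bar\theta\beta+\tfrac{\bar r}{\sqrt{\delta}}Z)\big)\Big] \;=\; \kappa^{2}\bar\alpha.
\end{equation*}

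Separately, since the entries of $\bb^{*}$ are drawn i.i.d. from $\Pi$ (with $\mathbb{E}_{\Pi}[\beta^{2}]=\kappa^{2}$ in the asymptotic convention adopted in the paper), the law of large numbers yields
\begin{equation*}
\frac{\|\bb^{*}\|^{2}}{p} \;=\; \frac{1}{p}\sum_{j=1}^{p}(\bb_{j}^{*})^{2} \;\overset{P}{\longrightarrow}\; \kappa^{2}.
\end{equation*}
Combining the two limits by the continuous mapping theorem (or Slutsky's lemma, applied to the ratio, which is well defined in the limit since $\kappa^{2}>0$),
\begin{equation*}
\frac{\hat{\bb}^{T}\bb^{*}}{\|\bb^{*}\|^{2}} \;=\; \frac{(1/p)\,\hat{\bb}^{T}\bb^{*}}{(1/p)\,\|\bb^{*}\|^{2}} \;\overset{P}{\longrightarrow}\; \frac{\kappa^{2}\bar\alpha}{\kappa^{2}} \;=\; \bar\alpha,
\end{equation*}
which is the claim.

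There is no serious obstacle: the work is essentially bookkeeping, namely (i) verifying the local Lipschitz property of $\Psi(u,v)=uv$, (ii) recognizing the limit as the right-hand side of the first equation in the nonlinear system, and (iii) correctly normalizing by $\|\bb^{*}\|^{2}/p$. If one wanted to be fully rigorous without assuming $\kappa^{2}$ is the limit of $\|\bb^{*}\|^{2}/p$ as a deterministic sequence, one could also derive $\|\bb^{*}\|^{2}/p\to\kappa^{2}$ from Theorem~\ref{thm:main} itself by choosing $\Psi(u,v)=v^{2}$; otherwise LLN suffices. The only mild point to watch is that the Theorem's locally-Lipschitz hypothesis does not a priori rule out unbounded contributions from a few large coordinates, but this is handled inside the proof of Theorem~\ref{thm:main} and does not need to be revisited here.
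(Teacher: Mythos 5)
Your proposal is correct and follows essentially the same route as the paper: apply Theorem~\ref{thm:main} with $\Psi(u,v)=uv$ and identify the limit with the right-hand side of the first equation of the nonlinear system~\eqref{eq:nonlinsys}, which equals $\kappa^2\bar\alpha$. The only cosmetic difference is the normalization step: the paper uses the exact identity $\|\bb^*\|^2=p\kappa^2$ coming from the definition $\kappa=\|\bb^*\|/\sqrt{p}$, whereas you invoke the LLN to get $\|\bb^*\|^2/p\to\kappa^2$; both are fine.
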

%\vspace{-1em}
\begin{proof}
Recall that $||\bb^*||^2 = p\kappa^2$. Applying Theorem~\ref{thm:main} with $\Psi(u,v) = uv$ gives,
\begin{equation}
    \frac{1}{||\bb^*||^2} \hat \bb^T \bb^* = \frac{1}{\kappa^2p} \sum_{j=1}^p {\hat\bb}_j\bb^*_j \overset{P}\longrightarrow  \frac{1}{\kappa^2}\mathbb E\big[\beta~\text{Prox}_{\lambda \bar\sigma \bar \tau\tilde f(\cdot)}\big(\bar\sigma\bar\tau(\bar \theta \beta+\frac{\bar r}{\sqrt{\delta}}Z)\big)\big] = \bar \alpha~,
\end{equation}
where the last equality is derived from the first equation in the nonlinear system~\eqref{eq:nonlinsys}, along with the fact that $(\bar \alpha, \bar \sigma, \bar \gamma, \bar \theta, \bar \tau, \bar r)$ is a solution to this system.
\end{proof}
%\vspace{-1em}
Corollary~\ref{cor:bias} states that upon centering $\hat \bb$ around $\bar\alpha \bb^*$, it becomes decorrelated from $\bb^*$. Therefore, we define a new estimate $\tilde \bb := \frac{\hat \bb}{\bar \alpha}$ and compute its mean-squared error in the following corollary.
\begin{cor}
\label{cor:var}
As $p\rightarrow \infty$, $\frac{1}{p}||\tilde \bb - \bb^* ||^2\overset{P}\longrightarrow  \frac{{\bar \sigma}^2}{{\bar \alpha}^2}$~.
\end{cor}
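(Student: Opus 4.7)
The plan is to apply Theorem~\ref{thm:main} with the locally-Lipschitz choice $\Psi(u,v) = (u/\bar\alpha - v)^2$, which directly represents $\frac{1}{p}\|\tilde\bb - \bb^*\|^2$ in the sum-form required by the theorem. On any bounded box $[-M,M]^2$, $\Psi$ has bounded gradient (with constant depending on $M$ and $1/\bar\alpha$), so the hypothesis is satisfied, and Theorem~\ref{thm:main} yields
\begin{equation*}
\frac{1}{p}\|\tilde \bb - \bb^*\|^2 \overset{\text{P}}\longrightarrow \mathbb{E}\Bigl[\bigl(\Gamma(\beta,Z)/\bar\alpha - \beta\bigr)^2\Bigr].
\end{equation*}

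Next, I would expand the right-hand side as
\begin{equation*}
\frac{1}{\bar\alpha^2}\,\mathbb{E}\bigl[\Gamma(\beta,Z)^2\bigr] \;-\; \frac{2}{\bar\alpha}\,\mathbb{E}\bigl[\beta\,\Gamma(\beta,Z)\bigr] \;+\; \mathbb{E}[\beta^2],
\end{equation*}
and read off each of the three expectations from the nonlinear system~\eqref{eq:nonlinsys} evaluated at the solution $(\bar\alpha,\bar\sigma,\bar\gamma,\bar\theta,\bar\tau,\bar r)$. The first equation gives $\mathbb{E}[\beta\,\Gamma(\beta,Z)] = \kappa^2\bar\alpha$, and the third gives $\mathbb{E}[\Gamma(\beta,Z)^2] = \kappa^2\bar\alpha^2 + \bar\sigma^2$. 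For the third term, I would note that $\mathbb{E}[\beta^2] = \kappa^2$; this either follows by definition of $\kappa = \|\bb^*\|/\sqrt{p}$ under the i.i.d.\ model on the entries of $\bb^*$, or equivalently by a second invocation of Theorem~\ref{thm:main} with $\Psi(u,v)=v^2$.

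Combining these identities,
\begin{equation*}
\frac{\kappa^2\bar\alpha^2 + \bar\sigma^2}{\bar\alpha^2} - \frac{2\kappa^2\bar\alpha}{\bar\alpha} + \kappa^2 \;=\; \kappa^2 + \frac{\bar\sigma^2}{\bar\alpha^2} - 2\kappa^2 + \kappa^2 \;=\; \frac{\bar\sigma^2}{\bar\alpha^2},
\end{equation*}
which is the claim. There is no real obstacle here beyond bookkeeping: once Theorem~\ref{thm:main} is in hand, the corollary is a direct plug-in using the first and third defining equations of the nonlinear system. The only subtlety worth flagging explicitly is the verification of the locally-Lipschitz hypothesis (which is immediate for a quadratic) and the identification $\mathbb{E}_\Pi[\beta^2]=\kappa^2$ so that the cross terms cancel cleanly.
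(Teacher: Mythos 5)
Your proposal is correct and follows essentially the same route as the paper: both apply Theorem~\ref{thm:main} with a quadratic $\Psi$ (the paper uses $\Psi(u,v)=(u-\bar\alpha v)^2$ and factors out $1/\bar\alpha^2$, which is the same computation as your $\Psi(u,v)=(u/\bar\alpha-v)^2$) and then read off the limit from the first and third equations of the nonlinear system~\eqref{eq:nonlinsys}. Your explicit expansion and the identification $\mathbb E_\Pi[\beta^2]=\kappa^2$ just make visible the bookkeeping the paper leaves implicit.
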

%\vspace{-1em}
\begin{proof}
We appeal to Theorem~\ref{thm:main} with $\Psi(u,v) = (u-\bar \alpha v)^2$,
\begin{equation}
    \frac{1}{p}||\tilde \bb - \bb^* ||^2 = \frac{1}{{\bar \alpha}^2}\big(\frac{1}{p}||\hat \bb - \bar \alpha\bb^* ||^2\big)\overset{P}\longrightarrow \frac{1}{{\bar \alpha}^2}\mathbb E\big[\big(\text{Prox}_{\lambda \bar\sigma \bar \tau\tilde f(\cdot)}\big(\bar\sigma\bar\tau(\bar \theta \beta+\frac{\bar r}{\sqrt{\delta}}Z)\big)-\bar\alpha \beta \big)^2\big] = \frac{{\bar \sigma}^2}{{\bar \alpha}^2}~,
\end{equation}
where the last equality is derived from the third equation in the nonlinear system~\eqref{eq:nonlinsys} together with the result of Corollary~\ref{cor:bias}.
\end{proof}
%\vspace{-0.5em}
In the next two sections, we investigate other properties of the estimate $\hat\bb$ under $\ell_1$ and $\ell_2$ regularization.
\section{RLR with $\ell_2^2$-regularization}
The $\ell_2$ norm regularization is commonly used in machine learning applications to stabilize the model. Adding this regularization  would simply shrink all the parameters toward the origin and hence  decrease the variance of the resulting model. Here, we provide a precise performance analysis of the RLR with $\ell_2^2$-regularization, i.e.,
%\vspace{-0.8em}
\begin{equation}
\label{eq:l2regularizedopt}
\hat \bb = \arg\min_{\bb\in \mathbb R^p}~\frac{1}{n}\cdot\big[\overset{n}{\underset{i=1}\sum}~\rho(\mathbf x_i^T \bb) - y_i (\mathbf x_i^T\bb)\big] + \frac{\lambda}{2p}\sum_{i=1}^p \bb_i^2 ~.
\end{equation}
To analyze~\eqref{eq:l2regularizedopt}, we use the result of Theorem~\ref{thm:main}. It can be shown that in the nonlinear system~\eqref{eq:nonlinsys}, $\bar \theta$, $\bar \tau$, $\bar r$ can be derived explicitely from solving the first three equations. This is due to the fact that the proximal operator of $\tilde f(\cdot) = \frac{1}{2}(\cdot)^2$ can be expressed in the following closed-form,
\begin{equation}
\label{eq:prox_l2}
   \text{Prox}_{t\tilde f(\cdot)}(x) = \arg\min_{y\in \mathbb R}~\frac{1}{2t}(y-x)^2+\frac{1}{2}y^2 = \frac{x}{1+t}~.
\end{equation}
This indicates that the proximal operator in this case is just a simple rescaling. Substituting~\eqref{eq:prox_l2} in the nonlinear system~\eqref{eq:nonlinsys}, we can rewrite the first three equations as follows,
\begin{equation}
    \begin{cases}
    \begin{aligned}
    &&\theta &=\frac{\alpha}{\gamma \delta}~,\\
    &&\tau &=\frac{\delta \gamma}{\sigma\big(1-\lambda \delta \gamma\big)}~,\\
    && r &= \frac{\sigma}{ \gamma \sqrt{\delta}}~.
    \end{aligned}
    \end{cases}
\end{equation}
Therefore we can state the following Theorem for $\ell_2^2$-regularization:
\begin{figure}[t]
	\centering
	\begin{subfigure}[b]{0.45\textwidth}
		\centering
		\includegraphics[width=\textwidth]{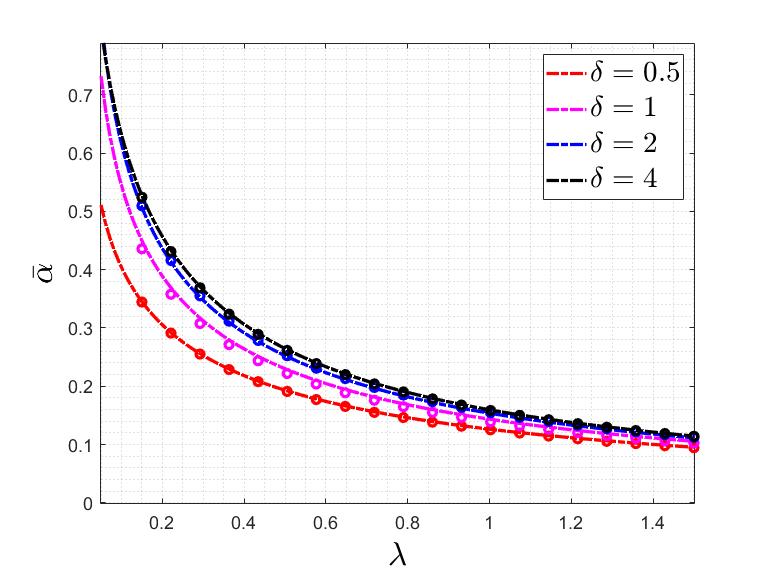}
		\caption{\small{ ~}}
		\label{fig:fig1a}
	\end{subfigure}
	\begin{subfigure}[b]{0.45\textwidth}
		\centering
		\includegraphics[width=\textwidth]{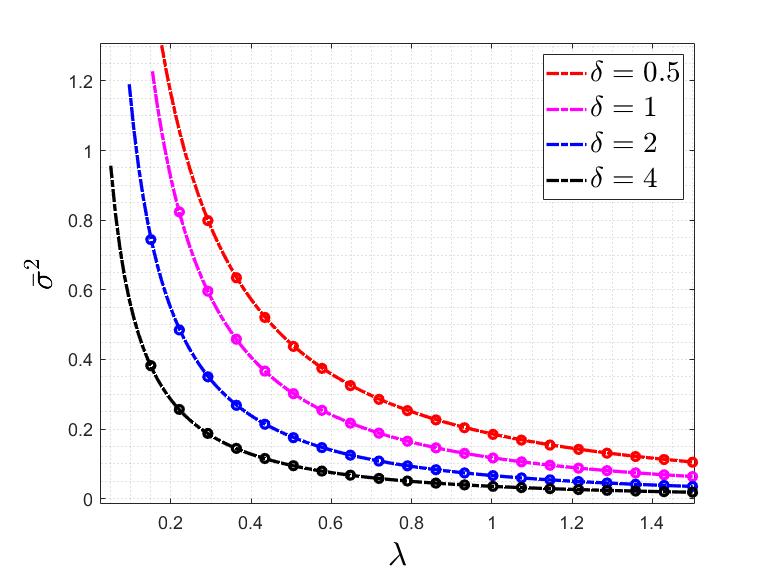}
		\caption{\small{ ~}}
		\label{fig:fig1b}
	\end{subfigure}
	\begin{subfigure}[b]{0.45\textwidth}
		\centering
		\includegraphics[width=\textwidth]{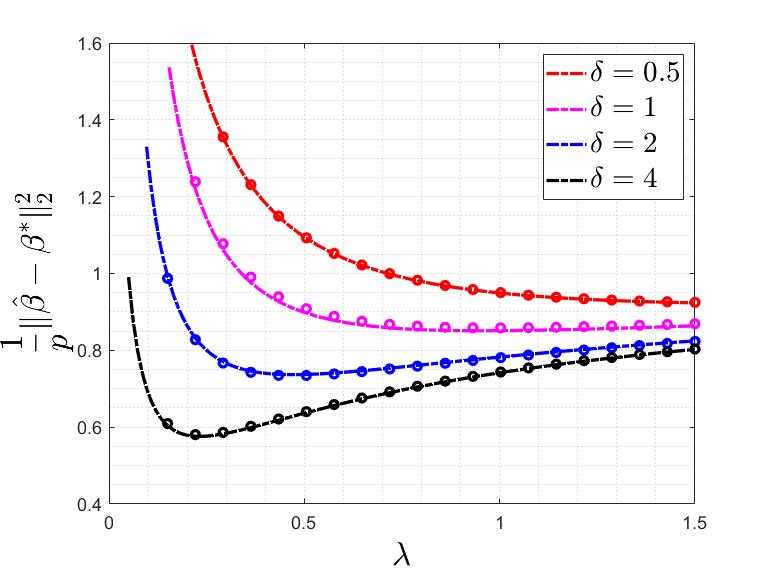}
		\caption{\small{ ~}}
		\label{fig:fig1c}
	\end{subfigure}
	\caption{\small{The performance of the regularized logistic regression under $\ell_2^2$ penalty  (a) the correlation factor $\bar \alpha$ (b) the variance ${\bar \sigma}^2$, and (c) the mean-squared error $\frac{1}{p}{||\hat \bb-\bb^*||}^2$. The dashed lines depict the theoretical result derived from Theorem~\ref{thm:l2_reg}}, and the dots are the result of empirical simulations. The empirical results is the average over $100$ independent trials with $p=250$ and $\kappa=1$~.}
\label{fig:fig11}
\end{figure}
\begin{theorem}\label{thm:l2_reg}
Consider the optimization~\eqref{eq:l2regularizedopt} with parameters $\kappa$, $\delta$, and $\gamma$, and the same assumptions as in Theorem~\ref{thm:main}. As $p\rightarrow \infty$, for any locally-Lipschitz function $\Psi(\cdot,\cdot)$, the following convergence holds,
\begin{equation}
    \frac{1}{p}\sum_{j=1}^p \Psi(\hat \bb_j-\bar \alpha {\bb}_j^*, {\bb}_j^*) \overset{\text{P}}\longrightarrow \mathbb E\big[\Psi\big(\bar \sigma Z,\beta\big)\big]~,
\end{equation}
where $Z$ is standard norma, $\beta\sim \Pi$, and $\bar \alpha$,$\bar \sigma$ are the unique solution to the following nonlinear system of equations,
\begin{equation}
\label{eq:nonlin_l2}
    \begin{cases}
    \begin{aligned}
    &&\frac{\sigma^2}{2\delta} &= ~\mathbb E\big[\rho'(-\kappa Z_1)\big(\kappa\alpha Z_1+\sigma Z_2 -\text{Prox}_{\gamma\rho(\cdot)}(\kappa\alpha Z_1+\sigma Z_2) \big)^2\big]~,\\
    &&-\frac{\alpha}{2\delta}&=~\mathbb E\big[ \rho''(-\kappa Z_1)\text{Prox}_{\gamma \rho(\cdot)}\big(\kappa \alpha Z_1+\sigma Z_2\big)\big]~,\\
    &&1-\frac{1}{\delta}+\lambda\gamma&=~\mathbb E\big[\frac{2\rho'(-\kappa Z_1)}{1+\gamma \rho''\big(\text{Prox}_{\gamma\rho(\cdot)}(\kappa\alpha Z_1 + \sigma Z_2)\big)}\big]~.
    \end{aligned}
    \end{cases}
\end{equation}
\end{theorem}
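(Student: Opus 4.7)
The plan is to obtain Theorem~\ref{thm:l2_reg} as a direct specialization of Theorem~\ref{thm:main} using the closed form~\eqref{eq:prox_l2} of the proximal operator for $\tilde f(x)=\tfrac12 x^2$. The whole argument is an algebraic reduction: first eliminate $\theta,\tau,r$ by solving the first three equations of~\eqref{eq:nonlinsys}, then verify that the remaining three equations coincide with~\eqref{eq:nonlin_l2}, and finally simplify the function $\Gamma$ in~\eqref{eq:Gamma_def} to an affine form so that the limit claim of Theorem~\ref{thm:main} reads as stated. No new probabilistic input is required beyond Theorem~\ref{thm:main}.

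Because $\text{Prox}_{\lambda\sigma\tau\tilde f(\cdot)}(x)=x/(1+\lambda\sigma\tau)$ is linear, the three expectations in the top block of~\eqref{eq:nonlinsys} can be evaluated using only independence of $\beta\sim\Pi$ and $Z\sim\mathcal N(0,1)$ together with $\mathbb E[\beta^2]=\kappa^2$. The first equation becomes $\alpha=\theta\sigma\tau/(1+\lambda\sigma\tau)$, the second becomes $\gamma=\sigma\tau/(\delta(1+\lambda\sigma\tau))$, and the third, after subtracting the square of the first, becomes $\sigma^2=(\sigma\tau/(1+\lambda\sigma\tau))^2\cdot r^2/\delta$. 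The second equation immediately yields the key identity $\sigma\tau/(1+\lambda\sigma\tau)=\delta\gamma$, and back-substitution produces the three stated closed forms
\begin{equation*}
\theta=\frac{\alpha}{\delta\gamma},\qquad \tau=\frac{\delta\gamma}{\sigma(1-\lambda\delta\gamma)},\qquad r=\frac{\sigma}{\gamma\sqrt\delta},
\end{equation*}
where the positive root when extracting $r$ is legitimate since the six CGMT unknowns are nonnegative.

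Substituting these identities into the bottom block of~\eqref{eq:nonlinsys} reproduces~\eqref{eq:nonlin_l2}. Multiplying the fourth equation by $r^2/2=\sigma^2/(2\gamma^2\delta)$ turns the left-hand side into $\sigma^2/(2\delta)$, giving the first equation of~\eqref{eq:nonlin_l2}. The fifth equation becomes $\alpha/\delta=\theta\gamma=-2\,\mathbb E[\rho''(-\kappa Z_1)\text{Prox}_{\gamma\rho}(\cdot)]$, i.e.\ the second equation of~\eqref{eq:nonlin_l2}. For the sixth equation, $\gamma/(\sigma\tau)=(1-\lambda\delta\gamma)/\delta$, so $1-\gamma/(\sigma\tau)=1-1/\delta+\lambda\gamma$, matching the third equation of~\eqref{eq:nonlin_l2}. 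For the limit statement, applying the same substitutions to~\eqref{eq:Gamma_def} collapses $\Gamma$ to an affine form:
\begin{equation*}
\Gamma(\beta,Z)=\frac{\bar\sigma\bar\tau}{1+\lambda\bar\sigma\bar\tau}\Big(\bar\theta\beta+\frac{\bar r}{\sqrt\delta}Z\Big)=\delta\bar\gamma\cdot\frac{\bar\alpha}{\delta\bar\gamma}\beta+\delta\bar\gamma\cdot\frac{\bar\sigma}{\bar\gamma\delta}Z=\bar\alpha\beta+\bar\sigma Z.
\end{equation*}

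To conclude, given a locally-Lipschitz $\Psi$, the shifted test function $\tilde\Psi(u,v):=\Psi(u-\bar\alpha v,v)$ is also locally-Lipschitz, so applying Theorem~\ref{thm:main} with $\tilde\Psi$ gives $\tfrac1p\sum_j\Psi(\hat\bb_j-\bar\alpha\bb_j^*,\bb_j^*)\overset{\text{P}}\longrightarrow\mathbb E[\Psi(\Gamma(\beta,Z)-\bar\alpha\beta,\beta)]=\mathbb E[\Psi(\bar\sigma Z,\beta)]$, as claimed. One should verify that the reduced three-equation system inherits the uniqueness hypothesis assumed for the six-equation one; this is immediate because the elimination above sets up a bijection between positive solutions of~\eqref{eq:nonlinsys} and of~\eqref{eq:nonlin_l2}. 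There is no genuine analytic obstacle — the only place one must be careful is in the positivity/branch choice when inverting the quadratic relation for $r$, which is controlled by the nonnegativity built into the CGMT derivation underlying Theorem~\ref{thm:main}.
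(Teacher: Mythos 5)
Your proposal is correct and follows essentially the same route as the paper: substitute the closed-form proximal operator $\text{Prox}_{t\tilde f}(x)=x/(1+t)$ into the first three equations of~\eqref{eq:nonlinsys}, solve for $\theta,\tau,r$ in terms of $\alpha,\sigma,\gamma$, and substitute into the remaining three equations to obtain~\eqref{eq:nonlin_l2}. Your additional explicit steps — collapsing $\Gamma$ to $\bar\alpha\beta+\bar\sigma Z$ and invoking Theorem~\ref{thm:main} with the shifted test function $\tilde\Psi(u,v)=\Psi(u-\bar\alpha v,v)$ — are correct completions of details the paper leaves implicit.
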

The proof is deferred to the Appendix. Theorem~\ref{thm:l2_reg} states that upon centering the estimate $\hat \bb$, it becomes decorrelated from $\bb^*$ and the distribution of the entries approach a  zero-mean Gaussian distribution with variance ${\bar\sigma}^2$. \\
Figure~\ref{fig:fig11} depicts the performance of the regularized estimate for different values of $\lambda$. As observed in the figure, increasing the value of $\lambda$ reduces the correlation factor $\bar \alpha$ (Figure~\ref{fig:fig1a})  and the variance ${\bar \sigma}^2$ (Figure~\ref{fig:fig1b}). Figure~\ref{fig:fig1c} shows the mean-squared-error of the estimate as a function of $\lambda$ . It indicates that for different values of $\delta$ there exist an optimal value $\lambda_{\text{opt}}$ that achieves the minimum mean-squared error.  
%\vspace{-0.7em}
\subsection{Unstructured case}
%\vspace{-0.3em}
When $\lambda=0$ in~\eqref{eq:l2regularizedopt}, we obtain the optimization with no regularization, i.e., the maximum likelihood estimate. When we set $\lambda$ to zero in~\eqref{eq:nonlin_l2}, Theorem~\ref{thm:l2_reg} gives the same result as Sur and Candes reported in~\cite{sur2018modern}. In their analysis, they have also provided an interesting interpretation of $\bar \gamma$ in terms of the likelihood ratio statistics. Studying the likelihood ratio test is beyond the scope of this paper.   
\section{Sparse Logistic Regression}\label{sec:sparse}
In this section we study the performance of our estimate when the regularizer is the $\ell_1$ norm. In modern machine learning applications the number of features, $p$, is often overwhelmingly large. Therefore, to avoid overfitting one typically needs to perform feature selection, that is, to exclude irrelevent variables from the regression model~\cite{james2013introduction}.  Adding an $\ell_1$ penalty to the loss function is the most popular approach for feature selection.\\
As a natural consequence of the result of Theorem~\ref{thm:main}, we study the performance of RLR with $\ell_1$ regularizer (referred to as "sparse LR") and evaluate its success in recovery of the sparse signals. In Section~\ref{sec:sparse_s1}, we extend our general analysis to the case of sparse LR. In other words, we will precisely analyze the performance of the solution of the following optimization,
\begin{equation}
\label{eq:sparseLRopt}
\hat \bb = \arg\min_{\bb\in \mathbb R^p}~\frac{1}{n}\cdot\big[\overset{n}{\underset{i=1}\sum}~\rho(\mathbf x_i^T \bb) - y_i (\mathbf x_i^T\bb)\big] + \frac{\lambda}{p}||\bb||_1 ~.
\end{equation}
In Section~\ref{sec:sparse_s1}, we explicitly describe the expectations in the nonlinear system~\eqref{eq:nonlinsys} using two  $q$-functions\footnote{The $q$-function is the tail distribution of the standard normal r.v. defined as, $Q(t) := \int_{t}^{\infty}\frac{e^{-x^2/2}}{\sqrt{2\pi}}dx$~.}. In Section~\ref{sec:sparse_s2}, we analyze the support recovery in the resulting estimate and show that the two $q$-functions represent the probability of on and off support recovery. 
%\vspace{-0.3em}
\subsection{Convergence behavior of sparse LR}\label{sec:sparse_s1}
%\vspace{-0.3em}
For our analysis in this section, we assume each entry $\bb^*_i$, for $i = 1, \ldots , p$, is sampled i.i.d. from a distribution,
%\vspace{-0.3em}
\begin{equation}
\label{eq:sparse_distribution}
    \Pi(\beta) = (1-s)\cdot\delta_0(\beta) + s\cdot \big(\frac{\phi(\frac{\beta}{\frac{\kappa}{\sqrt s}})}{\frac{\kappa}{\sqrt s}}\big) ,
\end{equation}
%\vspace{-0.4em}
where $s\in(0,1)$ is the \emph{sparsity factor}, $\phi(t):=\frac{e^{-t^2/2}}{\sqrt{2\pi}}$ is the density of the standard normal distribution, and $\delta_0(\cdot)$ is the Dirac delta function. In other words, entries of $\bb^*$ are zero with probability $1-s$, and the non-zero entries have a Gaussian distribution with appropriately defined variance. Although our analysis can be extended further, here we only present the result for a Gaussian distribution on the non-zero entries. The proximal operator of $\tilde f(\cdot) =|\cdot|$ is the soft-thresholding operator defined as, $\eta(x,t) = \frac{x}{|x|}(x-t)_+~.$
Therefore, we are able to explicitly compute the expectations with respect to $\tilde f(\cdot)$ in the nonlinear system~\eqref{eq:nonlinsys}. To streamline the representation, we define the following two proxies,
\begin{equation}
    \label{eq:proxy:sparse}
    t_1 = \frac{\lambda}{\sqrt{\frac{r^2}{\delta}+\frac{\theta^2\kappa^2}{s}}}~,~~
    t_2=\frac{\lambda}{\frac{r}{\sqrt \delta}}~.
\end{equation}
%\vspace{-0.2em}
In the next section, we provide an interpretation for $t_1$ and $t_2$. In particular, we will show that $Q(\bar{t_1})$, and $Q(\bar{t_2})$ are related to the probabilities of on and off support recovery. We can rewrite the first three equations in~\eqref{eq:nonlinsys} as follows,
\begin{equation}
\label{eq:nonlinsys_sparse}
    \begin{cases}
    \begin{aligned}
    &&\frac{\alpha}{2\sigma \tau} &=~\theta\cdot Q(t_1)~,\\
    &&\frac{\delta\gamma}{2\sigma \tau} &= s\cdot Q\big(t_1\big) + (1-s)\cdot Q\big(t_2\big)~,\\
    &&\frac{\kappa^2\alpha^2+\sigma^2}{2\sigma^2\tau^2}&=\frac{\delta\gamma\lambda^2}{2\sigma\tau}+\frac{\gamma r^2}{2\sigma\tau}+\kappa^2\theta^2\cdot Q\big(t_1\big)-\lambda^2(s\cdot\frac{\phi(t_1)}{t1} + (1-s)\cdot\frac{\phi(t_2)}{t_2})~.
    \end{aligned}
    \end{cases}
\end{equation}
Appending the three equations in~\eqref{eq:nonlinsys_sparse} to the last three equations in~\eqref{eq:nonlinsys} gives the nonlinear system  for sparse LR. Upon solving these equations, we can use the result of Theorem~\ref{thm:main} to compute various performance measure on the estimate $\hat \bb$.
\begin{figure*}[t]
	\centering
	\begin{subfigure}[b]{0.45\textwidth}
		\centering
		\includegraphics[width=\textwidth]{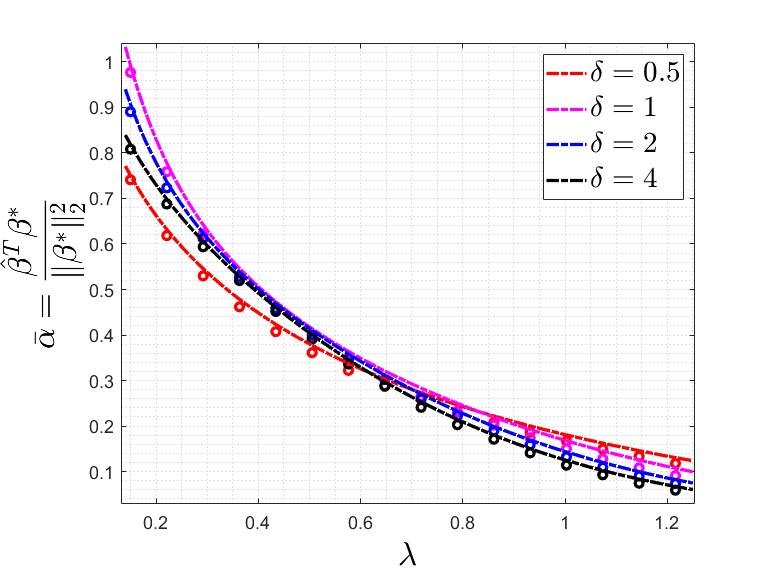}
		\caption{\small{ ~}}
		\label{fig:fig2a}
	\end{subfigure}
	\begin{subfigure}[b]{0.45\textwidth}
		\centering
		\includegraphics[width=\textwidth]{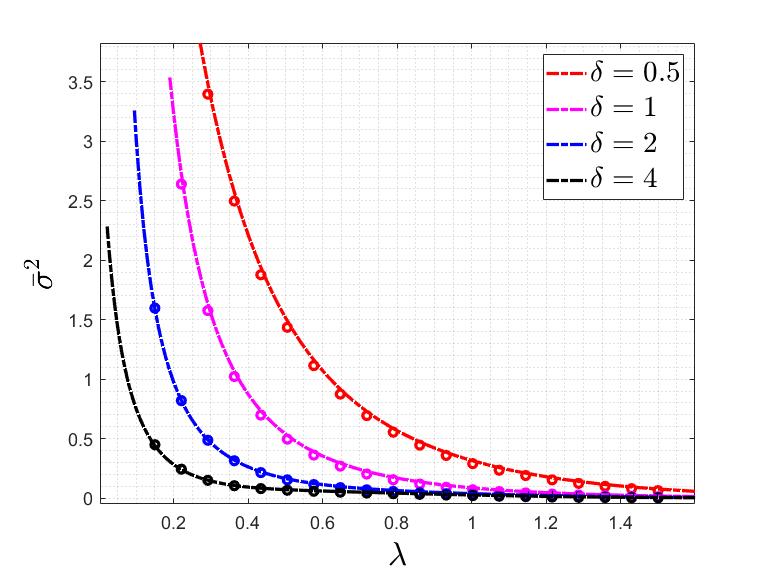}
		\caption{\small{ ~}}
		\label{fig:fig2b}
	\end{subfigure}
	\begin{subfigure}[b]{0.45\textwidth}
		\centering
		\includegraphics[width=\textwidth]{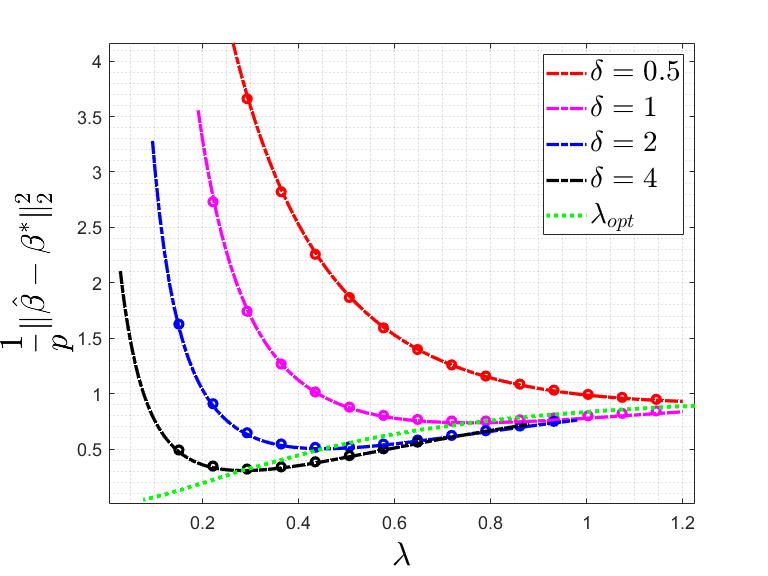}
		\caption{\small{ ~}}
		\label{fig:fig2c}
	\end{subfigure}
	\caption{\small{The performance of the regularized logistic regression under $\ell_1$ penalty  (a) the correlation factor $\bar \alpha$ (b) the variance ${\bar \sigma}^2$, and (c) the mean-squared error $\frac{1}{p}{||\hat \bb-\bb^*||}^2$. The dashed lines are the theoretical result derived from Theorem~\ref{thm:main}, and the dots are the result of empirical simulations. For the numerical simulations, the result is the average over $100$ independent trials with $p=250$ and $\kappa=1$~. }}
	\label{fig:fig2}
\end{figure*}
Figure~\ref{fig:fig2} shows the performance of our estimate as a function of $\lambda$. It can be seen that the bound derived from our theoretical result matches the empirical simulations. Also, it can be inferrred from Figure~\ref{fig:fig2c} that the optimal value of $\lambda$ ($\lambda_{\text{opt}}$ that achieves the minimum mean-squared error) is a decreasing function of $\delta$.
%\vspace{-1em}
\subsection{Support recovery}\label{sec:sparse_s2}
%\vspace{-0.4em}
In this section, we study the support recovery in sparse LR.  As mentioned earlier, sparse LR is often used when the underlying paramter has few non-zero entries. We define the support of $\bb^*$ as $\Omega := \{j|1\leq j \leq p, \bb^*_j\neq 0 \}$. Here, we would like to compute the probability of success in recovery of the support of $\bb^*$.\\
Let $\hat \bb$ denote the solution of the optimization~\eqref{eq:sparseLRopt}. We fix the value $\epsilon>0$ as a  hard-threshold based on which we decide whether an entry is on the support or not. In other words, we form the following set as our estimate of the support given $\hat \bb$,
\begin{equation}
    \hat \Omega = \{j| 1\leq j\leq p, |{\hat \beta}_j|>\epsilon\}
\end{equation}
In order to evaluate the success in support recovery, we define the following two error measures,
\begin{equation}
    E_1(\epsilon) = \text{Prob}\{j\in \hat\Omega|j\not \in \Omega\}~~,~~E_2(\epsilon) = \text{Prob}\{j\not \in \hat \Omega|j \in \Omega\}~.
\end{equation}
In our estimation, $E_1$ represents the probability of false alarm, and $E_2$ is the probability of misdetection of an entry of the support. The following lemma indicates the asymptotic behavior of both errors as $\epsilon$ approcahes zero~. 
\begin{lem}[Support Recovery]\label{lem:sup_rec}
Let $\hat \bb$ be the solution to the optimization~\eqref{eq:sparseLRopt}, and the entries of $\bb^*$ have distribution $\Pi$ defined in~\eqref{eq:sparse_distribution}. Assume $\lambda$ is chosen such that the nonlinear system~\eqref{eq:nonlinsys} has a unique solution $(\bar\alpha, \bar \sigma, \bar \gamma, \bar \theta, \bar \tau, \bar r)$. As $p\rightarrow \infty$ we have,
%\vspace{-0.4em}
\begin{equation}
\begin{aligned}
    &&&~~~~\lim_{\epsilon\downarrow 0}E_1(\epsilon)\overset{p}\longrightarrow ~2~Q\big(\bar{t}_1\big)~~\text{where, }~~\bar{t}_1=\frac{\lambda}{\frac{\bar r}{\sqrt{\delta}}},~~\text{and,}\\
    &&&\lim_{\epsilon\downarrow 0}E_2(\epsilon)\overset{p}\longrightarrow 1-~2~Q\big({\bar t}_2\big)~~\text{where,}~~{\bar t}_2 = \frac{\lambda}{\sqrt{\frac{\bar r^2}{\delta}+\frac{\bar \theta^2\kappa^2}{s}}}~.
\end{aligned}
\end{equation}
\end{lem}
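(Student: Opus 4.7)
The plan is to express both $E_1$ and $E_2$ as ratios of empirical averages and then apply Theorem~\ref{thm:main} to suitably smoothed versions of the indicator functions that define them, finally passing to the limit $\epsilon\downarrow 0$. Concretely, write
\begin{equation*}
E_1(\epsilon) = \frac{\frac{1}{p}\#\{j:|\hat\bb_j|>\epsilon,~\bb^*_j=0\}}{\frac{1}{p}\#\{j:\bb^*_j=0\}}~,\qquad E_2(\epsilon) = \frac{\frac{1}{p}\#\{j:|\hat\bb_j|\leq \epsilon,~\bb^*_j\neq 0\}}{\frac{1}{p}\#\{j:\bb^*_j\neq 0\}}~.
\end{equation*}
Since $\Pi = (1-s)\,\delta_0 + s\cdot \mathcal N(0,\kappa^2/s)$, the two denominators converge in probability to $1-s$ and $s$ respectively by the law of large numbers. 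For the numerators, the hypotheses of Theorem~\ref{thm:main} require a locally-Lipschitz $\Psi$, so I would introduce a Lipschitz approximation $g_\epsilon(u)$ of $\mathbf{1}[|u|>\epsilon]$ (zero on $|u|\leq \epsilon/2$, one on $|u|>\epsilon$, linear in between) and a Lipschitz bump $h_\eta(v)$ of unit height at $v=0$ and supported in $[-\eta,\eta]$. Applying Theorem~\ref{thm:main} with $\Psi(u,v)=g_\epsilon(u)h_\eta(v)$ and then letting $\eta\downarrow 0$ isolates the atomic mass of $\Pi$ at zero, yielding
\begin{equation*}
\frac{1}{p}\sum_{j} g_\epsilon(\hat\bb_j)\,\mathbf{1}[\bb^*_j=0] \overset{P}\longrightarrow (1-s)\,\mathbb E\big[g_\epsilon(\Gamma(0,Z))\big]~,
\end{equation*}
so $E_1(\epsilon)\overset{P}\to \mathbb P(|\Gamma(0,Z)|>\epsilon)$. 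An identical argument with the roles of the indicators exchanged gives $E_2(\epsilon)\overset{P}\to \mathbb P(|\Gamma(\beta',Z)|\leq \epsilon)$, where $\beta'\sim \mathcal N(0,\kappa^2/s)$ is the conditional law of $\Pi$ given $\beta\neq 0$.

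The remaining work is to evaluate these two limits explicitly as $\epsilon\downarrow 0$. For $\tilde f(\cdot)=|\cdot|$, the proximal operator is the soft-threshold $\eta(x,t)=\operatorname{sign}(x)(|x|-t)_+$, so
\begin{equation*}
\Gamma(\beta,Z) = \eta\!\left(\bar\sigma\bar\tau\big(\bar\theta\beta + \tfrac{\bar r}{\sqrt\delta}Z\big),~\lambda\bar\sigma\bar\tau\right),\qquad \Gamma(\beta,Z) = 0 ~\Longleftrightarrow~\Big|\bar\theta\beta + \tfrac{\bar r}{\sqrt\delta}Z\Big|\leq \lambda~.
\end{equation*}
Substituting $\beta=0$, the zero-event is exactly $\{|Z|\leq \lambda\sqrt\delta/\bar r\}$, and on its complement $\Gamma(0,Z)$ is continuously distributed, so $\mathbb P(|\Gamma(0,Z)|>\epsilon)\to \mathbb P(|Z|>\lambda\sqrt\delta/\bar r)=2Q(\bar t_1)$ as claimed. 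Substituting $\beta=\beta'$, the linear combination $W:=\bar\theta\beta'+\bar r Z/\sqrt\delta$ is centered Gaussian with variance $\bar\theta^2\kappa^2/s + \bar r^2/\delta$, hence $\mathbb P(|\Gamma(\beta',Z)|\leq \epsilon)\to \mathbb P(|W|\leq \lambda) = 1-2Q(\bar t_2)$ with $\bar t_2 = \lambda/\sqrt{\bar r^2/\delta+\bar\theta^2\kappa^2/s}$, as required.

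The main technical obstacle is the interchange of the three limits $p\to\infty$, $\eta\downarrow 0$, and $\epsilon\downarrow 0$: Theorem~\ref{thm:main} delivers only the first, and the indicator functions we actually care about are not Lipschitz. The cleanest remedy is to fix $\epsilon>0$, choose $\eta_p\downarrow 0$ slowly enough (along a diagonal subsequence) that convergence of $\frac{1}{p}\sum_j g_\epsilon(\hat\bb_j) h_{\eta_p}(\bb^*_j)$ still holds by Theorem~\ref{thm:main}, and then pass $\epsilon\downarrow 0$ using dominated convergence together with the observation that $\Gamma(0,Z)$ and $\Gamma(\beta',Z)$ carry no point mass away from zero. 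All remaining manipulations are elementary Gaussian tail integrations.
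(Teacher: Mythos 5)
Your proposal is correct, and it follows exactly the route the paper intends: the paper states Lemma~\ref{lem:sup_rec} without proof, but the surrounding discussion (the remarks after~\eqref{eq:proxy:sparse} and the form of $\Gamma$ in~\eqref{eq:Gamma_def}) makes clear that the claim is meant to follow from Theorem~\ref{thm:main} together with the observation that $\Gamma(c,Z)$ is a soft-threshold, so that $\Gamma(c,Z)=0$ iff $|\bar\theta c+\tfrac{\bar r}{\sqrt\delta}Z|\leq\lambda$; your Gaussian computations conditioned on $\beta=0$ and $\beta\sim\mathcal N(0,\kappa^2/s)$ reproduce $2Q(\bar t_1)$ and $1-2Q(\bar t_2)$ precisely. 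The only added value in your write-up is the Lipschitz smoothing of the indicators and the sandwich/diagonal argument for the order of the limits $p\to\infty$, $\eta\downarrow 0$, $\epsilon\downarrow 0$ — a genuine technical gap in the paper's presentation that your argument correctly fills.
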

%\vspace{-0.7em}
%The proof of this Lemma is deferred to Appendix. Note that this Lemma cannot be derived directly from the Theorem~\ref{thm:main} since the metric here is not Lipschitz. 
%\vspace{-0.5em}
\begin{figure*}[htp]
	\centering
	\begin{subfigure}[b]{0.47\textwidth}
		\centering
		\includegraphics[width=\textwidth]{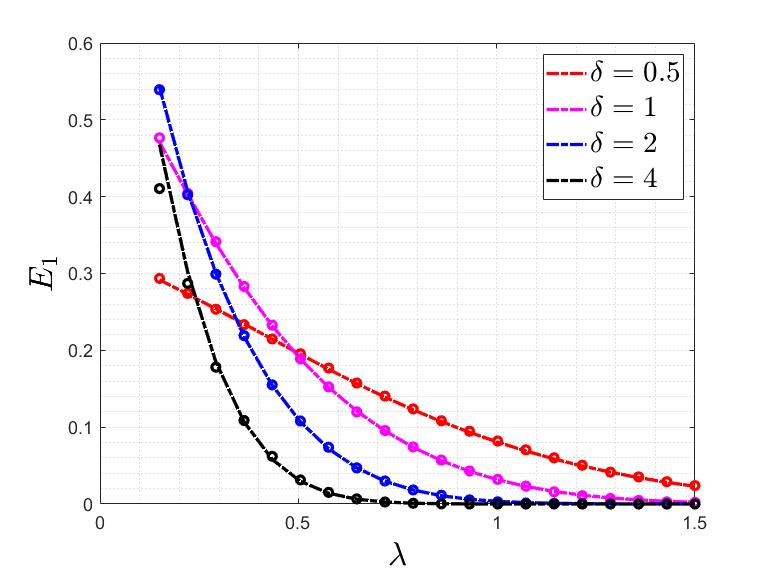}
		\caption{\small{ ~}}
		\label{fig:fig3a}
	\end{subfigure}
	\begin{subfigure}[b]{0.47\textwidth}
		\centering
		\includegraphics[width=\textwidth]{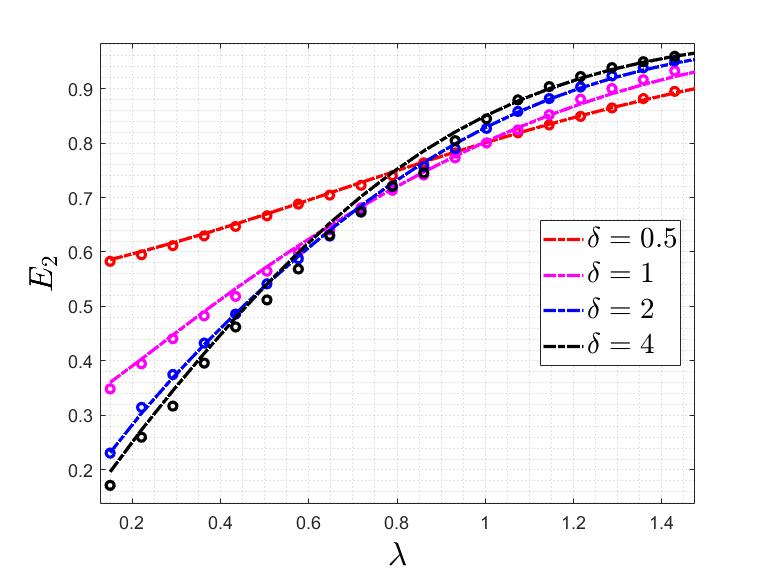}
		\caption{\small{ ~}}
		\label{fig:fig3b}
	\end{subfigure}
	\caption{\small{The support recovery in the regularized logistic regression with $\ell_1$ penalty for (a) $E_1$: the probability of false detection, (b) $E_2$: the probability of missing an entry of the support. The dashed lines are the theoretical results derived from Lemma~\ref{lem:sup_rec}, and the dots are the result of empirical simulations. For the numerical simulations, the result is the average over $100$ independent trials with $p=250$ and $\kappa=1$ and $\epsilon=0.001$~. }}\label{fig:fig3}
\end{figure*}

\section{Conclusion and Future Directions}
In this paper, we analyzed the performance of the regularized logistic regression (RLR), which is often used for parameter estimation in binary classification. We considered the setting where the underlying parameter has certain structure (e.g. sparse, group-sparse, low-rank, etc.) that can be enforced via a convex penalty function $f(\cdot)$. We precisely characterized the performance of the regularized maximum likelihood estimator via the solution to a nonlinear system of equations. Our main results can be used to measure the performance of RLR for a general  convex penalty function $f(\cdot)$. In particular, we apply our findings to two important special cases, i.e.,  $\ell_2^2$-RLR and $\ell_1$-RLR. When the regularizer is quadratic in parameters, we have shown that the nonlinear system can be simplified to three equations. When the regularization parameter, $\lambda$, is set to zero, which corresponds to the maximum likelihood estimator, we simply derived the results reported by Sur and Candes~\cite{sur2018modern}. For sparse logistic regression, we established that the nonlinear system can be represented using two $q$-functions. We further show that these two $q$-functions represent the probability of the support recovery.\\ 
For our analysis, we assumed the datapoints are drawn independently from a gaussian distribution and utilized the CGMT framework. An interesting future work is to extend our analysis to non-gaussian distributions.  To this end, we can exploit the techniques that have been used to establish the universality law (see ~\cite{oymak2017universality, panahi2017universal} and the references therein). As mentioned earlier in Section~\ref{sec:intro}, an advantage of RLR is that it allows parameter recovery even for instances where the (unconstrained) maximum likelihood estimate does not exist. Therefore, another interesting future direction is to analyze the conditions on $\lambda$ (as a function of $\delta$ and $\kappa$) that guarantees the existence of the solution to the RLR optimization~\eqref{eq:regularizedopt}. In the unstructured setting, this has been studied in a recent work by Candes and Sur~\cite{candes2018phase}.

\newpage
\bibliography{library}
\bibliographystyle{plain}
\newpage
\appendix
\section*{Appendix}
\label{sec:appendix}
\section{Convex Gaussian Min-max Theorem (CGMT)}
 Our analysis is based on the convex gaussian min-max theorem (CGMT). Here, we formally state this theorem. The CGMT associates with a Primary Optimization (PO) problem an Auxiliary Optimization (AO) problem from which we can investigate various properties of the primary optimization, such as the phase transition. In particular, the (PO) and the (AO) problems are defined respectively as follows:
\begin{subequations}\label{eq:POAO}
\begin{align}
\label{eq:PO_gen}
\Phi(\mathbf G)&:= \min_{\mathbf w\in\mathcal S_{\mathbf w}}~\max_{\mathbf u\in \mathcal S_{\mathbf u}}~ \mathbf u^T\mathbf G\mathbf w + \psi(\mathbf u, \mathbf w),\\
\label{eq:AO_gen}
\phi(\mathbf g,\mathbf h)&:= \min_{\mathbf w\in\mathcal S_{\mathbf w}}~\max_{\mathbf u\in \mathcal S_{\mathbf u}}~ ||\mathbf w||\mathbf g^T\mathbf u - ||\mathbf u||\mathbf h^T\mathbf w + \psi(\mathbf u, \mathbf w),
\end{align}
\end{subequations}
where $\mathbf G\in\mathbb R^{m\times n}, \mathbf g\in\mathbb R^m, \mathbf h\in\mathbb R^n$, $\mathcal S_{\mathbf w}\subset\mathbb R^n,\mathcal S_{\mathbf u}\subset\mathbb R^m$ and $\psi:\mathbb R^n\times\mathbb R^m\rightarrow\mathbb R$. Denote by $\mathbf w_\Phi:=\mathbf w_\Phi(\mathbf G)$ and $\mathbf w_\phi:=\mathbf w_\phi(\mathbf g,\mathbf h)$ any optimal minimizers in \eqref{eq:PO_gen} and \eqref{eq:AO_gen}, respectively.

\begin{theorem}[CGMT]\cite{thrampoulidis2016recovering}
In~\eqref{eq:POAO}, let $\mathcal S_\mathbf w$, $\mathcal S_{\mathbf u}$, be convex and compact sets, and assume $\psi(\cdot,\cdot)$ is convex-concave on $\mathcal S_{\mathbf w}\times\mathcal S_{\mathbf u}$. Also assume that $\mathbf G,~\mathbf g,$ and $\mathbf h$ all have entries i.i.d. standard normal. The following statements are true,
\begin{enumerate}
    \item for all $\mu\in \mathbb R$, and $t>0$,
    \begin{equation}
        \mathbb P(|\Phi(\mathbf G)-\mu|>t)\leq 2\mathbb P(|\phi(\mathbf g, \mathbf h)-\mu|\geq t)~.
    \end{equation}
    \item Let $\mathcal S$ be an arbitrary open subset of $\mathcal S_{\mathbf w}$ and $\mathcal S^c:=\mathcal S_{\mathbf w}/\mathcal S$. Denote $\Phi_{\mathcal S^c}(\mathbf G)$ and $\phi_{\mathcal S^c}(\mathbf g, \mathbf h)$ be the optimal costs of the optimizations in~\eqref{eq:PO_gen}, and ~\eqref{eq:PO_gen}, respectively, when the minimization over $\mathbf w$ is now constrained over $\mathbf w \in \mathcal S^c$. If there exists constants $\bar \phi$, ${\bar \phi}_{\mathcal S^c}$, and $\eta>0$ such that,
    \begin{itemize}
        \item ${\bar \phi}_{\mathcal S^c}\geq \bar \phi +3\eta$~,\\
        \item $\phi(\mathbf g, \mathbf h) < \bar \phi +\eta$, with probability at least $1-p$~,\\
        \item $\phi_{\mathcal S^c}(\mathbf g, \mathbf h)>{\bar \phi}_{\mathcal S^c}-\eta$, with probability at least $1-p$~,
    \end{itemize}
    then,
    $\mathbb P(\mathbf w_{\Phi}(\mathbf G)\in \mathcal S)\geq 1-4p$~.
\end{enumerate}
The probabilities are taken with respect to the randomness in $\mathbf G$, $\mathbf g$, and $\mathbf h$. 
\label{thm:CGMT}
\end{theorem}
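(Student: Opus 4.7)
I would prove Theorem \ref{thm:CGMT} by reducing both claims to two one-sided Gaussian min-max comparison inequalities between the PO cost $\Phi(G)$ and the AO cost $\phi(g, h)$: a \emph{classical} inequality that requires no convexity, and a \emph{reverse} inequality that crucially exploits the convex-concave structure of $\psi$ together with the convex-compactness of $S_w, S_u$.

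The classical direction uses Gordon's min-max theorem (a min-max strengthening of Slepian's lemma). Treating $X_{w, u} = u^T G w$ and $Y_{w, u} = \|w\| g^T u - \|u\| h^T w$ as centered Gaussian processes on $S_w \times S_u$, a direct covariance computation verifies Gordon's pairwise-covariance hypotheses (equal variances, and the correct sign for the $\min_w \max_u$ order), and adding the common deterministic $\psi(u, w)$ preserves the ordering, yielding $\mathbb P(\Phi(G) \leq c) \leq \mathbb P(\phi(g, h) \leq c)$ for every $c$. For the reverse direction, Sion's minimax theorem gives the identity $\Phi(G) = \max_u \min_w (u^T G w + \psi(u, w))$; reapplying Gordon to this swapped form (where the covariance inequalities flip sign under the interchange of min and max) and absorbing a factor of $2$ from a standard distributional-symmetrization step produces $\mathbb P(\Phi(G) \geq c) \leq 2\, \mathbb P(\phi(g, h) \geq c)$. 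Combining the two via a union bound with $c = \mu \pm t$ proves Part 1:
\begin{equation*}
\mathbb P(|\Phi - \mu| > t) \leq \mathbb P(\phi \leq \mu - t) + 2\, \mathbb P(\phi \geq \mu + t) \leq 2\, \mathbb P(|\phi - \mu| \geq t).
\end{equation*}

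Part 2 follows from the same two one-sided bounds, applied to the unrestricted and restricted problems respectively, combined with the hypothesis $\bar\phi_{S^c} \geq \bar\phi + 3\eta$. The reverse bound with $c = \bar\phi + \eta$ on the full problem uses hypothesis 2 to give $\Phi(G) < \bar\phi + \eta$ with probability at least $1 - 2p$. The classical bound with $c = \bar\phi_{S^c} - \eta$ on $(\Phi_{S^c}, \phi_{S^c})$ (Gordon's covariance hypotheses still hold on the compact, though non-convex, $S^c$) uses hypothesis 3 to give $\Phi_{S^c}(G) > \bar\phi_{S^c} - \eta \geq \bar\phi + 2\eta$ with probability at least $1 - p$. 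A union bound---absorbing a bit of slack into the stated $1 - 4p$---yields that both events hold simultaneously, in which case $\Phi(G) < \bar\phi + \eta < \bar\phi + 2\eta \leq \Phi_{S^c}(G)$, so no minimizer of the unrestricted PO can lie in $S^c$ and therefore $\mathbf w_\Phi(G) \in S$.

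The principal technical hurdle is the reverse inequality: classical Gordon supplies only the lower-tail direction, and without convexity the reverse direction simply fails. Establishing it requires swapping min and max via Sion's theorem (which needs both convex-concavity of $\psi$ and convex-compactness of $S_w, S_u$) and verifying that the pairwise covariance inequalities flip sign correctly in the swapped Gaussian process; the factor of $2$ reflects the measure-theoretic symmetrization that links the two swapped Gordon applications. A secondary obstacle in Part 2 is that $S^c$ is generally not convex, so Stage 2's reverse bound is unavailable on the restricted problem and one must rely only on the classical bound there---which is ultimately sufficient because only a \emph{lower} bound on $\Phi_{S^c}(G)$ is required.
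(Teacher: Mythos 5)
This theorem is not proved in the paper at all: it is quoted verbatim from \cite{thrampoulidis2016recovering}, so there is no in-paper argument to compare against. Your outline is essentially the standard proof from that reference: Gordon's Gaussian min--max comparison for the lower-tail bound, Sion's minimax theorem plus a second application of Gordon to the swapped $\max$--$\min$ form for the upper-tail bound (which is exactly where convex-concavity of $\psi$ and convex compactness of $\mathcal S_{\mathbf w},\mathcal S_{\mathbf u}$ enter), and a union bound combining the upper-tail bound on the unrestricted problem with the lower-tail bound on the $\mathcal S^c$-restricted problem to localize $\mathbf w_\Phi$. The one imprecision is the bookkeeping of the factor of $2$: it originates from the symmetrization over the auxiliary scalar Gaussian introduced to equalize variances in Gordon's lemma, and therefore appears in \emph{both} one-sided bounds (the standard statements are $\mathbb P(\Phi<c)\le 2\,\mathbb P(\phi\le c)$ and, under convexity, $\mathbb P(\Phi>c)\le 2\,\mathbb P(\phi\ge c)$), not only in the reverse direction as you suggest. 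Since the two tail events $\{\phi\le\mu-t\}$ and $\{\phi\ge\mu+t\}$ are disjoint, the union bound still gives $2\,\mathbb P(|\phi-\mu|\ge t)$ in Part 1 and the $1-4p$ bound in Part 2, so your conclusions are unaffected; your observation that only the Gordon (non-convex) direction is needed on $\mathcal S^c$ is also the correct and essential point.
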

We also use the following corollary that is true in the asymptotic regime,
\begin{cor}[Asymptotic CGMT]~\cite{thrampoulidis2016recovering} using the same notations and assumptions as in Theorem~\ref{thm:CGMT}, suppose there exists constants $\bar \phi<{\bar \phi}_{\mathcal S^c}$ such that $\phi(\mathbf g, \mathbf h)\overset{p}\longrightarrow \bar \phi$, and $\phi_{\mathcal S^c}(\mathbf g, \mathbf h)\longrightarrow {\bar \phi}_{\mathcal S^c}$. Then, 
\begin{equation}
    \lim_{n\rightarrow \infty}\mathbb P(\mathbf w_{\Phi}(\mathbf G)\in \mathcal S)=1~. 
\end{equation}
\label{cor:CGMT}
\end{cor}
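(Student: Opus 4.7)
The plan is to apply part 2 of Theorem~\ref{thm:CGMT} directly: the strict separation $\bar\phi < \bar\phi_{\mathcal S^c}$ supplies the margin parameter $\eta$, and the two convergence-in-probability hypotheses supply the two probabilistic bounds required. The whole argument is essentially bookkeeping — extract explicit finite-sample statements from the asymptotic ones, invoke the CGMT, then let the tolerance shrink.

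First, I would set $\eta := (\bar\phi_{\mathcal S^c} - \bar\phi)/4$, which is strictly positive because $\bar\phi < \bar\phi_{\mathcal S^c}$. With this choice, condition (i) of part 2 of Theorem~\ref{thm:CGMT} is immediate, since
\begin{equation*}
\bar\phi + 3\eta \;=\; \bar\phi + \tfrac{3}{4}\bigl(\bar\phi_{\mathcal S^c} - \bar\phi\bigr) \;<\; \bar\phi_{\mathcal S^c}.
\end{equation*}
Next, fix any $p_0 \in (0,1)$. By the hypothesis $\phi(\mathbf g,\mathbf h) \overset{P}\longrightarrow \bar\phi$, there exists $N_1$ such that for all $n \geq N_1$, $\mathbb P\bigl(\phi(\mathbf g,\mathbf h) < \bar\phi + \eta\bigr) \geq 1 - p_0$, which is condition (ii). Analogously, the convergence $\phi_{\mathcal S^c}(\mathbf g,\mathbf h) \to \bar\phi_{\mathcal S^c}$ (interpreted in probability) yields $N_2$ so that $\mathbb P\bigl(\phi_{\mathcal S^c}(\mathbf g,\mathbf h) > \bar\phi_{\mathcal S^c} - \eta\bigr) \geq 1 - p_0$ for $n \geq N_2$, giving condition (iii).

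Having verified all three hypotheses of part 2 of Theorem~\ref{thm:CGMT} for every $n \geq \max(N_1,N_2)$, the conclusion of that part gives $\mathbb P\bigl(\mathbf w_\Phi(\mathbf G) \in \mathcal S\bigr) \geq 1 - 4p_0$. Since $p_0 > 0$ was arbitrary, this yields $\liminf_{n\to\infty} \mathbb P\bigl(\mathbf w_\Phi(\mathbf G) \in \mathcal S\bigr) \geq 1 - 4p_0$ for every such $p_0$; sending $p_0 \downarrow 0$ and using the trivial upper bound $\mathbb P \leq 1$ yields $\lim_{n\to\infty} \mathbb P\bigl(\mathbf w_\Phi(\mathbf G) \in \mathcal S\bigr) = 1$, as claimed.

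There is essentially no obstacle beyond this reduction, because part 2 of Theorem~\ref{thm:CGMT} does all the heavy lifting. The one mild subtlety worth noting is well-posedness of $\phi_{\mathcal S^c}$: since $\mathcal S$ is an open subset of the compact set $\mathcal S_{\mathbf w}$, the restricted feasible set $\mathcal S^c = \mathcal S_{\mathbf w}\setminus \mathcal S$ is closed in $\mathcal S_{\mathbf w}$ and hence compact, so the constrained minima in $\Phi_{\mathcal S^c}$ and $\phi_{\mathcal S^c}$ are attained. No Gaussian tail estimates, proximal-operator calculations, or properties of the RLR nonlinear system are needed — the corollary is purely an asymptotic repackaging of the finite-sample separation criterion already encoded in Theorem~\ref{thm:CGMT}.
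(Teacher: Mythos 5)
Your derivation is correct: the choice $\eta=(\bar\phi_{\mathcal S^c}-\bar\phi)/4$ verifies the separation condition, convergence in probability supplies the two finite-sample bounds for any tolerance $p_0$, and letting $p_0\downarrow 0$ after invoking part 2 of Theorem~\ref{thm:CGMT} gives the limit. The paper itself states this corollary without proof, citing~\cite{thrampoulidis2016recovering}; your argument is exactly the standard reduction intended there, so there is nothing to flag.
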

We refer the interested reader to~\cite{thrampoulidis2016recovering, thrampoulidis2015regularized, thrampoulidis2018precise} for furder reading on the subject, its premises and applications.
\section{Useful Mathematical Tools}\label{sec:math_tools}
We gathered here some useful lemmas that are used in the proof of our main results. The first lemma provides the partial derivatives of the Moreau envelope function.
\begin{lem}
\label{lem:der_moreau}
Let $\Phi:\mathbb R^d\rightarrow \mathbb R$ be a convex function. For $\mathbf v \in \mathbb R^d$ and $t\in \mathbb R_{+}$, the \emph{Moreau envelope} function is defined as,
\begin{equation}
M_{\Phi(\cdot)}(\mathbf v,t) = \min_{\mathbf x \in \mathbb R^d} \Phi(\mathbf x) + \frac{1}{2t}||\mathbf x-\mathbf v||^2~,
\end{equation}
and the \emph{proximal operator} is the solution to this optimization, i.e., 
\begin{equation}
\text{Prox}_{t\Phi(\cdot)}(\mathbf v) = \arg\min_{\mathbf x \in \mathbb R^d} \Phi(\mathbf x) + \frac{1}{2t}||\mathbf x-\mathbf v||^2~.  
\end{equation}
The derivative of the Moreau envelope function can be computed as follows,
\begin{equation}
    \frac{\partial M_{\Phi(\cdot)}}{\partial \mathbf v} = \frac{1}{t}(\mathbf v -\text{Prox}_{t\Phi(\cdot)}(\mathbf v) )~~,~~~~~\frac{\partial M_{\Phi(\cdot)}}{\partial t} = -\frac{1}{2t^2}(\mathbf v -\text{Prox}_{t\Phi(\cdot)}(\mathbf v) )^2~.
\end{equation}
\end{lem}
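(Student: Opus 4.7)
The plan is to treat this as a standard application of Danskin's envelope theorem, with care taken because $\Phi$ is only assumed convex (possibly non-smooth). First I would note that for each fixed $(\mathbf v, t)$ with $t>0$, the inner objective $L(\mathbf x;\mathbf v,t) := \Phi(\mathbf x) + \frac{1}{2t}||\mathbf x-\mathbf v||^2$ is $(1/t)$-strongly convex in $\mathbf x$, so the minimizer $\mathbf x^*(\mathbf v,t) = \text{Prox}_{t\Phi(\cdot)}(\mathbf v)$ exists and is unique, and the first-order optimality condition reads $\frac{1}{t}(\mathbf v-\mathbf x^*)\in\partial\Phi(\mathbf x^*)$. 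The crux of the argument is that $L$ depends on the parameters $(\mathbf v,t)$ only through the smooth quadratic term $\frac{1}{2t}||\mathbf x-\mathbf v||^2$, so no subdifferential calculus on $\Phi$ is needed for the parameter derivatives.

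For the gradient with respect to $\mathbf v$, I would sandwich the finite difference $M_\Phi(\mathbf v+\mathbf h,t)-M_\Phi(\mathbf v,t)$ between two suboptimal evaluations that differ only in the $\mathbf v$-argument of $L$, namely $L(\mathbf x^*(\mathbf v+\mathbf h,t);\mathbf v+\mathbf h,t)-L(\mathbf x^*(\mathbf v+\mathbf h,t);\mathbf v,t)$ on the one side and $L(\mathbf x^*(\mathbf v,t);\mathbf v+\mathbf h,t)-L(\mathbf x^*(\mathbf v,t);\mathbf v,t)$ on the other. Each bound is an explicit quadratic in $\mathbf h$, and after expansion both reduce to $\frac{1}{t}(\mathbf v-\mathbf x^*_\bullet)^T\mathbf h+O(||\mathbf h||^2)$ where $\mathbf x^*_\bullet$ is either $\mathbf x^*(\mathbf v,t)$ or $\mathbf x^*(\mathbf v+\mathbf h,t)$. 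Dividing by $||\mathbf h||$ and sending $\mathbf h\to 0$, continuity of the proximal map forces the two bounds to collapse to a common linear functional, which identifies the gradient as $\frac{1}{t}(\mathbf v-\mathbf x^*(\mathbf v,t))$.

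For the partial with respect to $t$ I would run the analogous sandwich with $t+s$ in place of $\mathbf v+\mathbf h$. Because $L(\mathbf x;\mathbf v,t+s)-L(\mathbf x;\mathbf v,t)=\bigl(\tfrac{1}{2(t+s)}-\tfrac{1}{2t}\bigr)||\mathbf x-\mathbf v||^2$ is a closed-form expression, the two suboptimality bounds give $\bigl(\tfrac{1}{2(t+s)}-\tfrac{1}{2t}\bigr)||\mathbf x^*(\mathbf v,t+s)-\mathbf v||^2 \le M_\Phi(\mathbf v,t+s)-M_\Phi(\mathbf v,t)\le \bigl(\tfrac{1}{2(t+s)}-\tfrac{1}{2t}\bigr)||\mathbf x^*(\mathbf v,t)-\mathbf v||^2$ for $s>0$ (and the reverse for $s<0$). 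Since $\tfrac{1}{s}\bigl(\tfrac{1}{2(t+s)}-\tfrac{1}{2t}\bigr)\to -\tfrac{1}{2t^2}$ and $\mathbf x^*(\mathbf v,t+s)\to \mathbf x^*(\mathbf v,t)$, both sides share the common limit $-\tfrac{1}{2t^2}||\mathbf v-\mathbf x^*(\mathbf v,t)||^2$, yielding the stated formula.

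The one technical obstacle, which is not deep but must be addressed, is justifying continuity of $\mathbf x^*(\mathbf v,t)$ jointly in its arguments without assuming smoothness of $\Phi$. For $\mathbf v$ this is Moreau's classical $1$-Lipschitz (firm non-expansiveness) property of the proximal map. For $t$ it follows from a short compactness/uniqueness argument: any cluster point of $\mathbf x^*(\mathbf v,t+s)$ as $s\to 0$ must satisfy the same first-order inclusion as $\mathbf x^*(\mathbf v,t)$, and uniqueness (from $(1/t)$-strong convexity) forces agreement. Once this continuity is available, the sandwich calculations above close both formulas.
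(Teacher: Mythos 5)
Your proof is correct. The paper itself does not prove this lemma --- it simply cites Jourani et al.\ for the differentiability properties of the Moreau envelope --- so your self-contained argument is a genuine addition rather than a variant of the paper's route. The two-sided suboptimality sandwich is exactly the right mechanism: both bounds on $M_\Phi(\mathbf v+\mathbf h,t)-M_\Phi(\mathbf v,t)$ reduce to $\frac{1}{t}(\mathbf v-\mathbf x^*_\bullet)^T\mathbf h+\frac{1}{2t}\|\mathbf h\|^2$, they differ by $\frac{1}{t}\bigl(\mathbf x^*(\mathbf v,t)-\mathbf x^*(\mathbf v+\mathbf h,t)\bigr)^T\mathbf h=O(\|\mathbf h\|^2)$ by nonexpansiveness of the prox, and the $t$-derivative computation with the factor $\frac{1}{s}\bigl(\frac{1}{2(t+s)}-\frac{1}{2t}\bigr)\to-\frac{1}{2t^2}$ is right (with the inequalities correctly reversing for $s<0$). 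The only step worth tightening is the ``compactness'' half of your continuity-in-$t$ argument: before extracting cluster points of $\mathbf x^*(\mathbf v,t+s)$ you need them to stay in a bounded set, which follows because a convex $\Phi$ admits an affine minorant, so $L(\cdot\,;\mathbf v,t+s)$ is coercive uniformly over $t+s$ in a compact subinterval of $(0,\infty)$ and its sublevel sets at height $M_\Phi(\mathbf v,t+s)\le\sup_s M_\Phi(\mathbf v,t+s)<\infty$ are uniformly bounded; after that, closedness of the graph of $\partial\Phi$ plus uniqueness of the minimizer finishes it as you say. Note also that the displayed formula in the lemma writes $(\mathbf v-\text{Prox}_{t\Phi(\cdot)}(\mathbf v))^2$ for what your proof correctly produces as $\|\mathbf v-\text{Prox}_{t\Phi(\cdot)}(\mathbf v)\|^2$; your version is the right one for $d>1$.
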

We refer the interested reader to~\cite{jourani2014differential} for the proof as well as a detailed study of the properties of the Moreau envelope.  

The next two lemmas present some properties of the proximal operator for the function $\rho(z) = \log(1+ e^z)$.
\begin{lem}
\label{lem:prox}
Let $\rho(z)=\log(1+e^z)$, then the following identity holds,
\begin{equation}
\text{Prox}_{t\rho}(x+t) = -\text{Prox}_{t\rho}(-x)~.
\end{equation}
\end{lem}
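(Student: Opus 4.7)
The plan is to work directly from the first-order optimality condition characterizing the proximal operator together with a symmetry property of $\rho'$. Recall that for a differentiable convex function $\rho$, the scalar $z = \text{Prox}_{t\rho}(v)$ is the unique solution of
\begin{equation*}
z + t\rho'(z) = v.
\end{equation*}
The crucial structural fact I would use is the identity $\rho'(z) + \rho'(-z) = 1$, which follows immediately from $\rho'(z) = e^z/(1+e^z)$ since
\begin{equation*}
\frac{e^z}{1+e^z} + \frac{e^{-z}}{1+e^{-z}} = \frac{e^z}{1+e^z} + \frac{1}{1+e^z} = 1.
\end{equation*}

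With these two ingredients in hand, the proof is essentially a one-line manipulation. Let $z^\star := \text{Prox}_{t\rho}(x+t)$, so that $z^\star + t\rho'(z^\star) = x+t$. Rearranging gives $z^\star - t(1 - \rho'(z^\star)) = x$, and using the symmetry identity $1 - \rho'(z^\star) = \rho'(-z^\star)$ this becomes $z^\star - t\rho'(-z^\star) = x$. Setting $w^\star := -z^\star$ and multiplying through by $-1$ yields $w^\star + t\rho'(w^\star) = -x$, which by uniqueness of the proximal point characterization means $w^\star = \text{Prox}_{t\rho}(-x)$. Therefore $\text{Prox}_{t\rho}(x+t) = z^\star = -w^\star = -\text{Prox}_{t\rho}(-x)$, as claimed.

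There is no real obstacle here: strict convexity of $z \mapsto \tfrac{1}{2t}(z-v)^2 + \rho(z)$ (since $\rho$ is convex and the quadratic term is strictly convex) guarantees that the proximal point is the unique solution of the stationarity equation, which is all one needs for the uniqueness step. The only thing worth flagging is that the identity $\rho'(z) + \rho'(-z) = 1$ is specific to the logistic link function and is what makes the shift-by-$t$ symmetry work; a more general $\rho$ would not admit such a clean relation. I would present the proof in the order above (optimality condition, symmetry of $\rho'$, substitution $w = -z$, uniqueness).
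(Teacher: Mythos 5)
Your proof is correct and follows essentially the same route as the paper's: both arguments rest on the stationarity equation $z + t\rho'(z) = v$ characterizing the proximal point and the logistic symmetry $\rho'(z) + \rho'(-z) = 1$, differing only in which side of the identity you start from. Your version is slightly more explicit about why uniqueness of the stationary point justifies the final identification, which is a welcome touch.
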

\begin{proof}
Since the function $\rho(\cdot)$ is differentiable the proximal operator satisfies the following equation,
\begin{equation}
\label{eq:prox_eq}
\frac{1}{t}(\text{Prox}_{t\rho(\cdot)}(x)-x) + \rho'(\text{Prox}_{t\rho(\cdot)}(x)) = 0~.
\end{equation}
Next we use the fact that  $\rho'(-z)= 1-\rho'(z)$ for $z\in \mathbb R$, to rewrite the equation as follows,
\begin{equation}
\frac{1}{t}\big(-\text{Prox}_{t\rho(\cdot)}(-x) - (x+t)\big) + \rho'(-\text{Prox}_{t\rho(\cdot)}(-x))=0~,
\end{equation}
which gives the desired identity.
\end{proof}

\begin{lem}
\label{lem:prox_2}
The derivative of the proximal operator of the function $\rho(\cdot)$ can be computed as follows,
\begin{equation}
\label{eq:prox_deriv}
    \frac{d}{dx}\text{Prox}_{t\rho(\cdot)}(x) = \frac{1}{1+t\rho^{''}\big(\text{Prox}_{t\rho(\cdot)}(x)\big)}~.
\end{equation}
\end{lem}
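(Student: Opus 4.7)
The plan is to obtain the formula by implicit differentiation of the first-order optimality condition characterizing the proximal operator, exactly as was already done for a related identity in the proof of Lemma~\ref{lem:prox}. Let $p(x) := \text{Prox}_{t\rho(\cdot)}(x)$. Since $\rho(z) = \log(1+e^z)$ is smooth and strictly convex on $\mathbb{R}$, the objective $\rho(y) + \tfrac{1}{2t}(y-x)^2$ is strictly convex in $y$, so $p(x)$ is the unique minimizer and is characterized by the stationarity equation
\begin{equation*}
p(x) - x + t\,\rho'(p(x)) \;=\; 0, \qquad \text{i.e.,} \qquad p(x) + t\,\rho'(p(x)) \;=\; x.
\end{equation*}
This is the same identity used in equation~\eqref{eq:prox_eq} in the preceding lemma, so no new setup is needed.

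Next, I would invoke the implicit function theorem. The map $F(y,x) := y + t\rho'(y) - x$ is $C^\infty$ on $\mathbb{R}^2$, and its partial derivative in $y$ equals $1 + t\rho''(y)$. Since $\rho''(z) = e^z/(1+e^z)^2 > 0$ everywhere, we have $1 + t\rho''(y) > 0$ for all $t \ge 0$ and all $y \in \mathbb{R}$, which guarantees that $F_y \neq 0$ at the point $(p(x),x)$. The implicit function theorem then yields that $p$ is $C^\infty$ in a neighborhood of any $x$, and its derivative is
\begin{equation*}
p'(x) \;=\; -\frac{F_x(p(x),x)}{F_y(p(x),x)} \;=\; \frac{1}{1 + t\,\rho''(p(x))}.
\end{equation*}
Equivalently, one can differentiate both sides of $p(x) + t\rho'(p(x)) = x$ directly with respect to $x$ to obtain $p'(x)\bigl(1 + t\rho''(p(x))\bigr) = 1$ and solve for $p'(x)$.

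There is essentially no obstacle here: the only thing to verify is that the denominator does not vanish, and that follows immediately from strict convexity of $\rho$ (which also holds pointwise, not just strictly on average). The proof is therefore a two-line implicit differentiation, presented for completeness because the identity is invoked repeatedly in the later derivation of the system~\eqref{eq:nonlinsys} (in particular in the sixth equation, where the factor $1 + \gamma \rho''(\text{Prox}_{\gamma\rho(\cdot)}(\cdot))$ appears precisely as the reciprocal of this derivative).
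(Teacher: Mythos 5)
Your proof is correct and follows essentially the same route as the paper: differentiate the first-order stationarity condition $p(x)+t\rho'(p(x))=x$ (the paper's equation~\eqref{eq:prox_eq}) with respect to $x$ and solve for $p'(x)$. The only addition is your explicit appeal to the implicit function theorem and the observation that $1+t\rho''>0$, which the paper leaves implicit but which is a harmless (and welcome) bit of extra rigor.
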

\begin{proof}
Taking derivative with respect to $x$ of~\eqref{eq:prox_eq},
\begin{equation}
    \frac{1}{t}(\frac{d}{dx}\text{Prox}_{t\rho(\cdot)}(x) - 1) + \frac{d}{dx}\text{Prox}_{t\rho(\cdot)}(x) \times \rho^{''}\big(\text{Prox}_{t\rho(\cdot)}(x)\big)  =0~,
\end{equation}
which can be written as in~\eqref{eq:prox_deriv}.
\end{proof}
\begin{lem}[Stein's lemma]~\cite{vershynin2018high} For a function $f:\mathbb R \rightarrow \mathbb R$, we have 
$\mathbb E_Z[Zf(Z)] = \mathbb E_Z[f'(Z)]$~.
\label{lem:Stein}
\end{lem}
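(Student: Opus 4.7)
The plan is to prove Stein's identity via a one-line integration by parts against the standard Gaussian density $\phi(z) = (2\pi)^{-1/2} e^{-z^2/2}$. The key algebraic fact is that $\phi$ satisfies the first-order ODE $\phi'(z) = -z\phi(z)$, so multiplication by $z$ inside a Gaussian integral is, up to a sign, the same as differentiating the density. This is the entire idea: transfer the derivative from $\phi$ onto $f$ via partial integration.

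Concretely, I would begin from $\mathbb{E}[Zf(Z)] = \int_{-\infty}^{\infty} z f(z) \phi(z)\, dz$ and substitute $z\phi(z) = -\phi'(z)$ to obtain $-\int_{-\infty}^{\infty} f(z) \phi'(z)\, dz$. Integration by parts then gives $-\bigl[f(z)\phi(z)\bigr]_{-\infty}^{\infty} + \int_{-\infty}^{\infty} f'(z) \phi(z)\, dz$. The boundary term vanishes because $\phi$ decays faster than any polynomial at $\pm\infty$, and the remaining integral is exactly $\mathbb{E}[f'(Z)]$, which closes the identity.

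The only real issue is regularity: $f$ is not literally assumed to be $C^1$ in the statement. The clean hypothesis under which the argument above goes through verbatim is that $f$ is absolutely continuous and satisfies $\mathbb{E}|f'(Z)| < \infty$ together with mild growth at infinity (for instance $|f(z)| = o(e^{z^2/4})$), so that the boundary term decays and $f'$ exists almost everywhere as an $L^1(\phi\, dz)$ function. Under these conditions the integration-by-parts step is justified in its Lebesgue–Stieltjes form. If one prefers to avoid invoking absolute continuity directly, an equivalent route is to approximate $f$ by $C_c^\infty$ functions $f_n$, apply the smooth version of the identity to each $f_n$, and pass to the limit by dominated convergence, with the Gaussian weight providing the integrable envelope for both $Z f_n(Z)$ and $f_n'(Z)$.
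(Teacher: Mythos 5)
Your integration-by-parts argument is correct and complete, including the appropriate attention to the regularity of $f$ and the vanishing of the boundary term; the paper itself gives no proof of this lemma, only a citation to \cite{vershynin2018high}, and your argument is precisely the standard one found in that reference. Nothing further is needed.
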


\begin{lem}\label{lem:separable}
Let $f:\mathbb R^d\rightarrow \mathbb R$ be an invariantly separable function such that $f(\mathbf x) = \sum_{i=1}^d\tilde f(x_i)$ for all $\mathbf x\in \mathbb R^d$, where $\tilde f$ is a real-valued function. Then, we have:
\begin{equation}
    \label{eq:prox_separable}
    M_{f(\cdot)}(\mathbf v, t) = \sum_{i=1}^d{M_{\tilde f(\cdot)}(v_i, t)}~~,~~~\text{and}~~~~\text{Prox}_{tf(\cdot)}(\mathbf v)=\begin{bmatrix}
    \text{Prox}_{t\tilde f(\cdot)}(v_1)\\
    \text{Prox}_{t\tilde f(\cdot)}(v_2)\\
    \vdots\\
    \text{Prox}_{t\tilde f(\cdot)}(v_d)
    \end{bmatrix}~.
\end{equation}
\end{lem}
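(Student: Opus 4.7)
The plan is to unwind the definition of the Moreau envelope and observe that separability of $f$, together with the Pythagorean identity for the squared Euclidean norm, turns a single $d$-dimensional minimization into $d$ decoupled scalar minimizations.

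First I would write
\begin{equation*}
M_{f(\cdot)}(\mathbf v,t)=\min_{\mathbf x\in\mathbb R^d}\Bigl[\,f(\mathbf x)+\tfrac{1}{2t}\|\mathbf x-\mathbf v\|^2\Bigr]=\min_{\mathbf x\in\mathbb R^d}\sum_{i=1}^d\Bigl[\tilde f(x_i)+\tfrac{1}{2t}(x_i-v_i)^2\Bigr],
\end{equation*}
using the hypothesis $f(\mathbf x)=\sum_i\tilde f(x_i)$ in the first term and $\|\mathbf x-\mathbf v\|^2=\sum_i(x_i-v_i)^2$ in the second. Because each summand depends on only one coordinate $x_i$, and the feasible set $\mathbb R^d$ is a product of the scalar feasible sets $\mathbb R$, the infimum of the sum equals the sum of the infima. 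This immediately yields
\begin{equation*}
M_{f(\cdot)}(\mathbf v,t)=\sum_{i=1}^d\min_{x_i\in\mathbb R}\Bigl[\tilde f(x_i)+\tfrac{1}{2t}(x_i-v_i)^2\Bigr]=\sum_{i=1}^d M_{\tilde f(\cdot)}(v_i,t),
\end{equation*}
which is the first claim.

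For the proximal operator, the same decoupling shows that any minimizer $\mathbf x^\star$ of the $d$-dimensional problem must have each coordinate $x_i^\star$ minimize the $i$-th scalar subproblem $\tilde f(x_i)+\tfrac{1}{2t}(x_i-v_i)^2$, and conversely assembling the coordinatewise minimizers produces a global minimizer of the joint problem. Hence $x_i^\star=\mathrm{Prox}_{t\tilde f(\cdot)}(v_i)$ for each $i$, giving the stacked-vector expression stated in \eqref{eq:prox_separable}. When $\tilde f$ is convex the scalar minimizers are unique, so $\mathrm{Prox}_{tf(\cdot)}(\mathbf v)$ is a well-defined vector; if $\tilde f$ is nonconvex one interprets the identity as a set-valued coincidence, but no additional machinery is needed.

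There is no genuine obstacle here: the result is a direct consequence of the fact that unconstrained minimization of a separable objective over a product domain decouples across coordinates. The only minor care is to note explicitly that both the penalty $f$ and the quadratic proximity term $\tfrac{1}{2t}\|\cdot-\mathbf v\|^2$ are separable in the same coordinate system, which is what allows the sum of minima to equal the minimum of the sum.
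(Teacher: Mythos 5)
Your proof is correct and follows essentially the same route as the paper's: expand the definition, use that both $f$ and the squared norm are coordinatewise separable, and exchange the minimum with the sum. In fact you are slightly more thorough than the paper, which only writes out the Moreau-envelope identity explicitly and leaves the proximal-operator part implicit.
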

\begin{proof}
We can write,
\begin{equation}
\begin{aligned}
&&M_{f(\cdot)}(\mathbf v,t) = \min_{\mathbf x \in \mathbb R^d} f(\mathbf x) + \frac{1}{2t}||\mathbf x-\mathbf v||^2 &= \min_{\mathbf x \in \mathbb R^d} \sum_{i=1}^d \tilde f(x_i) + \frac{(x_i-v_i)^2}{2t}~,\\
&&&=\sum_{i=1}^d \min_{x_i} \tilde f(x_i) + \frac{(x_i-v_i)^2}{2t}~,\\
&&&=\sum_{i=1}^d M_{\tilde f(\cdot)}(v_i, t)~.
\end{aligned}
\end{equation}
\end{proof}

\section{Proof of Theorem~\ref{thm:main}}\label{sec:proof}
We present the proof of our main result that is a precise characterization on the performance of the optimization program~\eqref{eq:regularizedopt} in the limit where  $p,n\rightarrow \infty$ at a fixed ratio $\delta:=\frac{n}{p}$.  We assume the data points are drawn independently from Gaussian distribution, $\mathbf x_i \overset{\text{i.i.d.}}\sim \mathcal N(0,\frac{1}{p}\mathbf I_p)$. We first rewrite~\eqref{eq:regularizedopt} as follows,
\begin{equation}
\min_{\bb\in \mathbb R^p}~\frac{1}{n}\mathbf 1^T\rho(\frac{1}{\sqrt{p}}\mathbf H\bb) - \frac{1}{n\sqrt{p}}\mathbf y^T\mathbf H\bb +\frac{\lambda}{p} f(\bb)
\end{equation}
where the action of function $\rho(\cdot)$ on a vector is considered component-wise, $\mathbf y\in \mathbb R^n$ and $\mathbf H\in \mathbb R^{n\times p}$ are defined as follows,
\begin{equation}
\mathbf y = \begin{bmatrix}y_1\\y_2\\\vdots\\y_n\end{bmatrix}~,~\mathbf H = \sqrt{p}\cdot\begin{bmatrix}-\mathbf x_1^T-\\-\mathbf x_2^T-\\\vdots\\-\mathbf x_n^T-\end{bmatrix}~.
\end{equation}
Note that the matrix $\mathbf H$ is defined in such a way that its entries have i.i.d. standard normal distribution. \\
We use the CGMT framework for our analysis. The proof strategy consists of three main steps:
\begin{enumerate}
    \item Finding the auxiliary optimization: In order to apply the result of Theorem~\ref{thm:CGMT}, we need to rewrite the optimization as a bilinear form and find its associated auxiliary optimization. 
    \item Analyzing the auxiliary optimization: The goal of this step is to simplify the auxiliary optimization in such a way that its performance can be characterized via a scalar optimization.
    \item Finding the optimality condition on the scalar optimization: We investigate the solution to the resulting scalar optimization. Specifically, by writing the first-order optimality conditions, we will derive the nonlinear system of equations~\eqref{eq:nonlinsys}.
\end{enumerate}
We explain each of the three steps in more details in the following subsections.
\subsection{Finding the auxiliary optimization}\label{sec:find_AO}
In order to apply the CGMT, we need to have a min-max optimization. Introducing a new variable $\mathbf u$, we have the following optimization,
\begin{equation}
\label{eq:addconstraint}
\begin{aligned}
&&\min_{\bb\in \mathbb R^p,~\mathbf u\in \mathbb R^n}~&\frac{1}{n}\mathbf 1^T\rho(\mathbf u) - \frac{1}{n}\mathbf y^T\mathbf u +\frac{\lambda}{p} f(\bb)\\
&&& \text{s.t. }~\mathbf u = \frac{1}{\sqrt{p}}\mathbf H\mathbf \bb
\end{aligned}
\end{equation}
Next, we use the Lagrange multiplier $\mathbf v$ to rewrite~\eqref{eq:addconstraint} as a min-max optimization,
\begin{equation}
\label{eq:minmax1}
\min_{\bb\in \mathbb R^p,\mathbf u\in \mathbb R^n}\max_{\mathbf v \in \mathbb R^n }~\frac{1}{n}\mathbf 1^T\rho(\mathbf u) - \frac{1}{n}\mathbf y^T\mathbf u +\frac{\lambda}{p} f(\bb) + \frac{1}{n}\mathbf v^T(\mathbf u - \frac{1}{\sqrt{p}}\mathbf H \bb)~.
\end{equation}
Since $\mathbf y$ depends on $\mathbf H$ we can not directly apply CGMT to the bilinear form $\mathbf v^T\mathbf H \bb$. To solve this issue, we first introduce, $\mathbf P := \frac{1}{{||\bb^{*}||}_2^2}\bb^{*}{\bb^{*}}^T$, and $\mathbf P^{\perp} := \mathbf I_{p} - \mathbf P$,  the projection matrices on the direction of $\bb^{*}$ and its orthogonal complement, respectively. We use these projections to decompose the matrix $\mathbf H$ as, $\mathbf H = \mathbf H_1 + \mathbf H_2$, with $\mathbf H_1 := \mathbf H\times\mathbf P$, and $\mathbf H_2 := \mathbf H\times\mathbf P^{\perp}$.  Rewriting~\eqref{eq:minmax1} with the decomposition of  $\mathbf H$ would give,
\begin{equation}
\label{eq:primary}
\min_{\bb\in \mathbb R^p,\mathbf u\in \mathbb R^n}\max_{\mathbf v \in \mathbb R^n }~\frac{1}{n}\mathbf 1^T\rho(\mathbf u) - \frac{1}{n}\mathbf y^T\mathbf u +\frac{\lambda}{p} f(\bb) + \frac{1}{n}\mathbf v^T(\mathbf u - \frac{1}{\sqrt{p}}\mathbf H_1 \bb) - \frac{1}{n\sqrt{p}}\mathbf v^T\mathbf H_2 \bb~.
\end{equation}
It is worth noting that after performing this decomposition, the label vector  ($\mathbf y$) would be independent of $\mathbf H_2$ since,
\begin{equation}
\mathbf y = Ber\big(\rho'(\frac{1}{\sqrt p} \mathbf H \bb^{*})\big) = Ber\big(\rho'(\frac{1}{\sqrt p} \mathbf H \mathbf P\bb^{*})\big) = Ber\big(\rho'(\frac{1}{\sqrt p} \mathbf H_1 \bb^{*})\big)~,
\end{equation}
where we used $\mathbf P\bb^{*} = \bb^{*}$. Exploiting this fact, one can check that all the additive terms in the objective function of~\eqref{eq:primary} except the last one are independent of $\mathbf H_2$. Also, the objective function is convex with respect to $\bb$ and $\mathbf u$ and concave with respect to $\mathbf v$. In order to apply the CGMT framework, we only need an extra condition which is restricting the feasible sets of $\bb, \mathbf u$, and $\mathbf v$ to be compact and convex. We can introduce some artificial convex and bounded  sets $\mathcal S_\mathbf u$, $S_{\mathbf v}$, and $\mathcal S_{\bb}$, and perform the optimization over these sets. Note that these sets can be chosen large enough such that they do not affect the optimization itself. For simplicity, in our arguments here we ignore the condition on the compactness of the fesible sets and apply the CGMT whenever our feasible sets are convex.

 The optimization program~\eqref{eq:primary} is suitable to be analyzed via the CGMT as the conditions are all satisfied. Having identified~\eqref{eq:primary} as the (PO) in our optimization, it is straightforward to write its corresponding (AO) as in ~\eqref{eq:POAO}. Therefore, the Auxiliary Optimization (AO) can be written as follows,
\begin{align}
\label{eq:aux}
\min_{\bb\in \mathbb R^p,\mathbf u\in \mathbb R^n}\max_{\mathbf v \in \mathbb R^n }~\frac{1}{n}\mathbf 1^T\rho(\mathbf u) - \frac{1}{n}\mathbf y^T\mathbf u +\frac{\lambda}{p} f(\bb) + \frac{1}{n}\mathbf v^T(\mathbf u - \frac{1}{\sqrt{p}}\mathbf H_1 \bb)\nonumber\\
- \frac{1}{n\sqrt{p}}(\mathbf v^T\mathbf h||\mathbf P^{\perp}\bb||+||\mathbf v||\mathbf g^T\mathbf P^{\perp}\bb)~,
\end{align}
where $\mathbf h \in \mathbb R^n$ and $\mathbf g \in \mathbb R^p$ have i.i.d. standard normal entries. Next, we need to analyze the optimization~\eqref{eq:aux} to characterize its performance.

\subsection{Analyzing the auxiliary optimization}\label{sec:Analyze_AO}
In this section, we analyze the auxiliary optimization~\eqref{eq:aux}. Ideally, we would like to solve the optimizations with respect to the direction of the vectors, in order to finally get a scalar-valued optimization over the magnitude of the variables.\\
Proceeding onwards, we first perform  the maximization with respect to the direction of $\mathbf v$. We can write the following maximization with respect to $\mathbf v$,
\begin{equation}
\max_{\mathbf v \in \mathbb R^n }~ \frac{1}{n\sqrt{p}}||\mathbf v|| \mathbf g^T\mathbf P^{\perp}\bb + \frac{1}{n}\mathbf v^T\big(\mathbf u - \frac{1}{\sqrt{p}}\mathbf H_1 \bb - \frac{||\mathbf P^{\perp} \bb||}{\sqrt p} \mathbf h\big)~.
\end{equation}
In order to maximize the objective function, $\mathbf v$ chooses its direction to be the same as the vector it is multiplied to. Define $r:=||\mathbf v||/\sqrt{n}$, then maximizing over the direction of $\mathbf v$ would give,
\begin{equation}
\max_{r\geq 0}~~~r \big(\frac{1}{\sqrt{np}}\mathbf g^T\mathbf P^{\perp}\bb + ||\frac{1}{\sqrt n}\mathbf u - \frac{1}{\sqrt{np}}\mathbf H_1 \bb - \frac{||\mathbf P^{\perp} \bb||}{\sqrt{np}} \mathbf h||\big)~.
\end{equation}
Replacing this in~\eqref{eq:aux}, we would have,
\begin{align}
\label{eq:opt2}
\min_{\bb\in \mathbb R^p,\mathbf u\in \mathbb R^n}\max_{r \geq 0 }~\frac{1}{n}\mathbf 1^T\rho(\mathbf u) - \frac{1}{n}\mathbf y^T\mathbf u +\frac{\lambda}{p} f(\bb) +r\frac{1}{\sqrt{np}}\mathbf g^T\mathbf P^{\perp}\bb \nonumber \\
+ r ||\frac{1}{\sqrt n}\mathbf u - \frac{1}{\sqrt{np}}\mathbf H~(\mathbf P \bb) - \frac{||\mathbf P^{\perp} \bb||}{\sqrt{np}} \mathbf h||~,
\end{align}
where we replaced $\mathbf H_1$ with $\mathbf H\times \mathbf P$~. Next, we would like to solve the minimization with respect to $\bb$.\\
Before continuing our analysis, we need to discuss an important point that would help us in the remaining of this section. It will be observed that in order to simplify the optimization, we would like to flip the orders of $\min$ and $\max$ in the (AO) optimization.  Since the objective function in the  optimization~\eqref{eq:opt2} is not convex-concave we cannot appeal to the Sion's min-max theorem in order to flip $\min$ and $\max$. However, it has been shown in~\cite{thrampoulidis2018precise} (see Appendix~A) that flipping the order $\min$ and $\max$ in the (AO) is allowed in the asymptotic setting. This is mainly due to the fact that the original (PO) optimization was convex-concave with respect to its variables, and as the CGMT suggests (AO) and (PO) are tightly related in the asymptotic setting; hence, flipping the order of optimizations in (AO) is justified whenever such a flipping is allowed in the (PO).  We appeal to this result to flip the orders of $\min$ and $\max$ when needed.\\
The goal is to express the final result in terms of the \emph{expected Moreau envelope}  of the regularization function, $f(\cdot)$ and the link function, $\rho(\cdot)$. Finding the optimal direction of $\bb$ is cumbersome due to the existence of the term $\lambda f(\bb)$ in the objective. So, we introduce new variables $\mmu,\mathbf w \in \mathbb R^p$ and rewrite the optimization as follows,
\begin{align}
\label{eq:opt3}
\min_{\substack{\bb\in \mathbb R^p,\mathbf u\in \mathbb R^n\\\mmu  \in \mathbb R^p}}\max_{\substack{\mathbf w\in \mathbb R^p\\r \geq 0}}~\frac{1}{n}\mathbf 1^T\rho(\mathbf u) - \frac{1}{n}\mathbf y^T\mathbf u +\frac{\lambda}{p} f(\mmu) + r \frac{1}{\sqrt{np}}\mathbf g^T\mathbf P^{\perp}\bb \nonumber\\
\qquad\qquad+ r||\frac{1}{\sqrt n}\mathbf u - \frac{1}{\sqrt{np}}\mathbf H~(\mathbf P \bb) - \frac{||\mathbf P^{\perp} \bb||}{\sqrt{np}} \mathbf h||+\frac{1}{p}\mathbf w^T(\mmu-\bb)~.
\end{align}
We are now able to perform the optimization with respect to $\bb$. As explained above, we are allowed  to flip the order of $\min$ and $\max$ in the asymptotic regime. We first analyze $\min_{\bb}$ to find the optimal direction of $\bb$. To streamline the notations, we introduce the scalars $\alpha := \frac{\bb^T\bb^{*}}{||\bb^{*}||^2}$, and $\sigma:= \frac{1}{\sqrt p}||\mathbf P^{\perp}\bb||$. Also define $\mathbf q := \frac{1}{\kappa\sqrt p}\mathbf H\bb^{*}$, where $\mathbf q$ has i.i.d. standard normal entries (recall that $\mathbf H$ has i.i.d. standard normal entries). Optimizing with respect to the direction of $\mathbb P^{\perp} \bb$ yields,
\begin{align}
\label{eq:opt4}
&&\min_{\substack{\mmu\in \mathbb R^p,\mathbf u\in \mathbb R^n\\\alpha\in \mathbb R, \sigma\geq 0}}\max_{\substack{\mathbf w\in \mathbb R^p\\r \geq 0}}&~\frac{1}{n}\mathbf 1^T\rho(\mathbf u) - \frac{1}{n}\mathbf y^T\mathbf u +\lambda f(\mmu) -\sigma||\frac{1}{\sqrt p}\mathbf P^\perp (\frac{r}{\sqrt \delta}\mathbf g-\mathbf w)||\nonumber\\
&&&+r ||\frac{1}{\sqrt n}\mathbf u - \frac{\kappa\alpha}{\sqrt n}\mathbf q - \frac{\sigma}{\sqrt n} \mathbf h||+\frac{1}{p}(\mathbf P\mathbf w)^T\mmu+\frac{1}{p}(\mathbf P^\perp\mathbf w)^T\mmu-\frac{1}{p}(\mathbf P\mathbf w)^T\bb~,
\end{align}
where $\delta:=\frac{n}{p}$ is the oversampling ratio. Next, we use a trick adopted from~\cite{thrampoulidis2018precise} where by introducing two new scalar variables, namely $\upsilon$ and $\tau$, we can change $||\cdot||$ to $||\cdot||^2$ which simplifies the next steps of our analysis. The new optimization would be,
\begin{align}
\label{eq:opt5}
\min_{\substack{\mmu\in \mathbb R^p,\mathbf u\in \mathbb R^n\\\alpha\in \mathbb R, \sigma\geq 0\\\upsilon\geq 0}}~\max_{\substack{\mathbf w\in \mathbb R^p\\r,\tau \geq 0}}~\frac{1}{n}\mathbf 1^T\rho(\mathbf u) - \frac{1}{n}\mathbf y^T\mathbf u +\frac{\lambda}{p} f(\mmu) - \frac{\sigma}{2\tau}-\frac{\sigma\tau}{2}||\frac{1}{\sqrt p}\mathbf P^\perp (\frac{r}{\sqrt \delta}\mathbf g-\mathbf w)||^2 +\frac{r}{2\upsilon}\nonumber \\+ \frac{r\upsilon}{2}  ||\frac{1}{\sqrt n}\mathbf u - \frac{\kappa\alpha}{\sqrt n}\mathbf q - \frac{\sigma}{\sqrt n} \mathbf h||^2+\frac{1}{p}(\mathbf P\mathbf w)^T\mmu+\frac{1}{p}(\mathbf P^\perp\mathbf w)^T\mmu-\frac{1}{p}(\mathbf P\mathbf w)^T\bb~.
\end{align}
Next, in order to compute the optimal $\mathbf w$, we use the following completion of squares,
%\begin{equation}
%\label{eq:completion_of_squares}
%-\frac{\sigma\tau}{2}||\frac{1}{\sqrt n}\mathbf P^\perp (r\mathbf g-\mathbf w)||^2 + \frac{1}{n}(\mathbf P^\perp \mathbf w)^T\mmu = -\frac{\sigma\tau}{2}||\frac{1}{\sqrt n}\mathbf P^\perp (r\mathbf g-\mathbf w+\frac{1}{\sigma\tau}\mmu)||^2 + \frac{1}{2n\sigma\tau}||\mathbf P^\perp \mmu||^2+\frac{r}{n}(\mathbf P^\perp \mathbf g)^T\mmu
%\end{equation}
\begin{equation}
\label{eq:completion_of_squares_2}
\begin{aligned}
&&-\frac{\sigma\tau}{2}||\frac{1}{\sqrt p}\mathbf P^\perp (\frac{r}{\sqrt \delta}\mathbf g-\mathbf w)||^2 + \frac{1}{p}(\mathbf P^\perp \mathbf w)^T\mmu &= -\frac{\sigma\tau}{2}||\frac{1}{\sqrt p}\mathbf P^\perp (\frac{r}{\sqrt \delta}\mathbf g-\mathbf w+\frac{1}{\sigma\tau}\mmu)||^2\\
&&& + \frac{1}{2p\sigma\tau}||\mathbf P^\perp \mmu+\frac{\sigma\tau r}{\sqrt \delta}\mathbf P^\perp \mathbf g||^2 -\frac{\sigma\tau r^2}{2n}||\mathbf P^\perp \mathbf g||^2~.
\end{aligned}
\end{equation}
Since $\mathbf g\in \mathbb R^p$ has standard normal entries, we can approximate $\frac{\sigma\tau r^2}{2n}||\mathbf P^\perp \mathbf g||^2$ with $\frac{\sigma\tau r^2}{2\delta}$. We exploit~\eqref{eq:completion_of_squares_2} to solve the inner optimization with respect to $\mathbf w$ which gives,
\begin{equation}
\label{eq:opt6}
\begin{aligned}
&&\min_{\substack{\mmu\in \mathbb R^p,\mathbf u\in \mathbb R^n\\\alpha\in \mathbb R, \sigma, \upsilon\geq 0\\\frac{1}{p}{\bb^{*}}^T\mmu=\alpha\kappa^2}}~\max_{\substack{r,\tau \geq 0}}~&\frac{1}{n}\mathbf 1^T\rho(\mathbf u) - \frac{1}{n}\mathbf y^T\mathbf u - \frac{\sigma}{2\tau}-\frac{\sigma\tau r^2}{2\delta}+\frac{r}{2\upsilon} -\frac{\kappa^2\alpha^2}{2\sigma\tau} \\
&&&+ \frac{r\upsilon}{2}  ||\frac{1}{\sqrt n}\mathbf u - \frac{\kappa\alpha}{\sqrt n}\mathbf q - \frac{\sigma}{\sqrt n} \mathbf h||^2+\frac{1}{2p\sigma\tau}||\mmu+\frac{\sigma\tau r}{\sqrt \delta}\mathbf g||^2+\frac{\lambda}{p} f(\mmu)~,
\end{aligned}
\end{equation}
where we also used the following equality:
\begin{equation}
\begin{aligned}
&&\frac{1}{p}||\mathbf P^\perp \mmu+\frac{\sigma\tau r}{\sqrt \delta}\mathbf P^\perp \mathbf g||^2 &= \frac{1}{p}|| \mmu+\frac{\sigma\tau r}{\sqrt \delta} \mathbf g||^2 - \frac{1}{p}||\mathbf P\mmu||^2 -(\sigma\tau r)^2\frac{||\mathbf P\mathbf g||^2}{n}-\frac{2\sigma\tau r}{p\sqrt \delta}(\mathbf P\mathbf g)^T\mathbf \mu\\
&&\boxed{p\rightarrow +\infty} \quad\quad& = \frac{1}{p}|| \mmu+\frac{\sigma\tau r}{\sqrt \delta}\mathbf g||^2 - \frac{1}{p}||\mathbf P\mmu||^2 = \frac{1}{p}|| \mmu+\frac{\sigma\tau r}{\sqrt \delta} \mathbf g||^2 - \kappa ^2\alpha^2~.
\end{aligned}
\end{equation}
Consequently, by flipping the order of $\min$ and $\max$, we first compute the minimization with respect to $\mmu$. Hence, the optimal $\mmu$ would be the solution to the following optimization:
\begin{equation}
\label{eq:opt_mu}
\begin{aligned}
&&\min_{\mmu \in \mathbb R^p}&\quad\frac{1}{2p\sigma\tau}||\mmu-\frac{\sigma\tau r}{\sqrt \delta}\mathbf g||^2+\frac{\lambda}{p} f(\mmu)\\
&&&~\text{s.t.}~~\frac{1}{p}{\bb^{*}}^T\mmu = \alpha\kappa^2
\end{aligned}
\end{equation} 
Using the Lagrange multiplier $\theta$ we can rewrite this optimization as,
\begin{equation}
\label{eq:opt_mu2}
\begin{aligned}
&&\min_{\mmu \in \mathbb R^p}~\max_{\theta \in \mathbb R}&\quad\frac{1}{2p\sigma\tau}||\mmu-\frac{\sigma\tau r}{\sqrt \delta}\mathbf g||^2+\frac{\lambda}{p} f(\mmu)-\frac{\theta}{p}{\bb^{*}}^T\mmu +\alpha\theta\kappa^2
\end{aligned}
\end{equation}
Applying the completion of squares we have,
\begin{equation}
\label{eq:comp_square_3}
\frac{1}{2p\sigma\tau}||\mmu-\frac{\sigma\tau r}{\sqrt \delta}\mathbf g||^2 - \frac{\theta}{p}{\bb^{*}}^T\mmu = \frac{1}{2p\sigma\tau}||\mmu-\frac{\sigma\tau r}{\sqrt \delta}\mathbf g-\theta\sigma\tau\bb^{*}||^2 -\frac{\sigma\tau\theta^2\kappa^2}{2}~,
\end{equation}
where we omit the term $\frac{1}{p} {\mathbf g}^T\bb^{*} = \mathcal O(\frac{1}{\sqrt{p}})$ as its negligible compare to the other terms (which are of constant orders). We are able to represent the solution of~\eqref{eq:opt_mu} in terms of the Moreau envelope of the function $f(\cdot)$ as follows,
\begin{equation}
\label{eq:opt_mu_3}
\min_{\substack{\mmu \in \mathbb R^p\\\frac{1}{p}{\bb^{*}}^T\mmu = \alpha\kappa^2}}\frac{1}{2p\sigma\tau}||\mmu-\frac{\sigma\tau r}{\sqrt \delta}\mathbf g||^2+\frac{\lambda}{p} f(\mmu)~=~\max_{\theta \in \mathbb R}~\frac{1}{p}M_{\lambda f}\big(\sigma\tau(\frac{r}{\sqrt \delta}\mathbf g + \theta \bb^{*}), \sigma\tau\big)+\alpha\theta\kappa^2 - \frac{\sigma\tau\theta^2\kappa^2}{2}
\end{equation} 
Substituting~\eqref{eq:opt_mu_3} in~\eqref{eq:opt6}, we have the following optimization:
\begin{align}
\label{eq:opt7}
\min_{\substack{\mathbf u\in \mathbb R^n\\\alpha\in \mathbb R, \sigma, \upsilon\geq 0}}~\max_{\substack{r,\tau \geq 0\\\theta\in \mathbb R}}~\frac{1}{n}\mathbf 1^T\rho(\mathbf u) - \frac{1}{n}\mathbf y^T\mathbf u+ \frac{r\upsilon}{2}  ||\frac{1}{\sqrt n}\mathbf u - \frac{\kappa\alpha}{\sqrt n}\mathbf q - \frac{\sigma}{\sqrt n} \mathbf h||^2- \frac{\sigma}{2\tau}-\frac{\sigma\tau r^2}{2\delta}+\frac{r}{2\upsilon}  \nonumber \\
-\frac{\kappa^2\alpha^2}{2\sigma\tau}+\kappa^2\alpha\theta-\frac{\kappa^2\sigma\tau\theta^2}{2}+\frac{1}{p}M_{\lambda f(\cdot)}\big(\sigma\tau(\frac{r}{\sqrt \delta}\mathbf g + \theta \bb^{*}), \sigma\tau\big)~.
\end{align}
We now focus on the optimization with respect to $\mathbf u$. Recall that $\mathbf y = Ber\big(\rho'(\frac{1}{\sqrt p}\mathbf H\bb^{*})\big)=Ber\big(\rho'(\kappa \mathbf q)\big)$. We are interested in solving the following optimization:
\begin{equation}
\label{eq:min_u}
\min_{\mathbf u\in \mathbb R^n}~\frac{1}{n}\mathbf 1^T\rho(\mathbf u) - \frac{1}{n}\mathbf y^T\mathbf u+ \frac{r\upsilon}{2}  ||\frac{1}{\sqrt n}\mathbf u - \frac{\kappa\alpha}{\sqrt n}\mathbf q - \frac{\sigma}{\sqrt n} \mathbf h||^2~,
\end{equation}
Similar to the previous steps, we first do a completion of squares as follows,
\begin{equation}
\label{comp_square_4}
\begin{aligned}
&&- \frac{1}{n}\mathbf y^T\mathbf u+ \frac{r\upsilon}{2}  ||\frac{1}{\sqrt n}\mathbf u - \frac{\kappa\alpha}{\sqrt n}\mathbf q - \frac{\sigma}{\sqrt n} \mathbf h||^2&= \frac{r\upsilon}{2}||\frac{1}{\sqrt n}\mathbf u - \frac{\kappa\alpha}{\sqrt n}\mathbf q - \frac{\sigma}{\sqrt n} \mathbf h-\frac{1}{r\upsilon\sqrt n}\mathbf y||^2\\
&&&\quad - \frac{1}{2r\upsilon}||\mathbf y||^2 - \frac{k\alpha}{n}\mathbf y^T\mathbf q - \frac{\sigma}{n}\mathbf y^T\mathbf h~.
\end{aligned}
\end{equation}
Next, we use the distribution of $\mathbf y$ to simplify the expressions in the right-hand side of~\eqref{comp_square_4}. We can write,
\begin{equation}
\frac{1}{n}||\mathbf y||^2 = \frac{1}{n}\sum_{i=1}^n y_i^2 ~\overset{\text{WLLN}}{\underset{n\rightarrow \infty}\Longrightarrow}~\mathbb E[y_i^2]=\mathbb E[y_i]={\mathbb E}_Z[\rho^\prime(\kappa Z)] = \frac{1}{2}~,
\end{equation}
and,
\begin{equation}
\frac{1}{n}\mathbf y^T\mathbf q = \frac{1}{n}\sum_{i=1}^n y_i q_i = \frac{1}{n}\sum_{i=1}^n Ber(\rho'(\kappa q_i)) q_i~\overset{\text{WLLN}}{\underset{n\rightarrow \infty}\Longrightarrow}~\mathbb E_Z[Z\cdot \rho'(\kappa Z)] = \kappa~ \mathbb E_Z[\rho^{\prime\prime}(\kappa Z)]~,
\end{equation}
where $Z\sim\mathcal N(0,1)$. Also note that we can ignore the term $\frac{\sigma}{n}\mathbf y^T\mathbf h$ since it is of order $\frac{1}{\sqrt n}$. Hence, we are able to rewrite the optimization~\eqref{eq:min_u} with respect to $\mathbf u$ in the following form:
\begin{equation}
\label{eq:min_u_2}
\min_{\mathbf u\in \mathbb R^n}~\frac{1}{n}\mathbf 1^T\rho(\mathbf u) + \frac{r\upsilon}{2n}  ||\mathbf u - \kappa\alpha\mathbf q - \sigma \mathbf h-\frac{1}{r\upsilon}\mathbf y||^2-\frac{1}{4r\upsilon}-\kappa^2\alpha\mathbb E_Z[\rho^{\prime\prime}(\kappa Z)]~.
\end{equation}
We can rewrite the equation~\eqref{eq:min_u_2} in terms of the Moreau envelope, $M_{\rho(\cdot)}$, as follows,
\begin{equation}
\label{eq:opt8}
\begin{aligned}
&&\min_{\substack{\sigma, \upsilon\geq 0\\ \alpha\in \mathbb R}}~\max_{\substack{r,\tau \geq 0\\\theta\in \mathbb R}}~& - \frac{\sigma}{2\tau}-\frac{\sigma\tau r^2}{2\delta}+\frac{r}{2\upsilon} -\frac{\kappa^2\alpha^2}{2\sigma\tau}+\kappa^2\alpha\theta-\frac{\kappa^2\sigma\tau\theta^2}{2}-\frac{1}{4r\upsilon}-\kappa^2\alpha\mathbb E_Z[\rho^{\prime\prime}(\kappa Z)] \\
&&&+\frac{1}{p}M_{\lambda f(\cdot)}\big(\sigma\tau(\frac{r}{\sqrt \delta}\mathbf g + \theta \bb^{*}), \sigma\tau\big)+\frac{1}{n}~M_{\rho(\cdot)}\big(\kappa\alpha\mathbf q+\sigma\mathbf h+\frac{1}{r\upsilon}\mathbf y,\frac{1}{r\upsilon}\big)~.
\end{aligned}
\end{equation}
As the last step, we want to analyze the convergence properties of (AO). Recall that $f(\cdot)$ is a separable function. Therefore, using the result of Lemma~\ref{lem:separable}, we have:
\begin{equation}
    M_{\lambda f(\cdot)}\big(\sigma\tau(\frac{r}{\sqrt \delta}\mathbf g + \theta \bb^{*}), \sigma\tau\big) = \sum_{i=1}^{p}M_{\lambda \tilde f(\cdot)}\big(\sigma\tau(\frac{r}{\sqrt \delta}\mathbf g_i + \theta \bb^{*}_i), \sigma\tau\big)
\end{equation}
Using the strong law of large numbers, we have,
\begin{equation}
    \frac{1}{p}~M_{\lambda f(\cdot)}\big(\sigma\tau(\frac{r}{\sqrt \delta}\mathbf g + \theta \bb^{*}), \sigma\tau\big) \overset{a.s.}\longrightarrow \mathbb E\big[M_{\lambda \tilde f(\cdot)}\big(\sigma\tau(\frac{r}{\sqrt \delta} Z + \theta \beta), \sigma\tau\big)\big]~,
\end{equation}
where $Z$ is a standard normal random variable and $\beta\sim \Pi$ is independent of $Z$. Similarly, we can write,
\begin{equation}
    \frac{1}{n}~M_{\rho(\cdot)}\big(\kappa\alpha\mathbf q+\sigma\mathbf h+\frac{1}{r\upsilon}\mathbf y,\frac{1}{r\upsilon}\big) \overset{a.s.}\longrightarrow \mathbb E\big[M_{\rho(\cdot)}\big(\kappa \alpha Z_1 +\sigma Z_2 + \frac{1}{r\upsilon}Ber(\kappa Z_1), \frac{1}{r\upsilon} \big)\big]~.
\end{equation}
We appeal to Lemma~9 in Appendix A of~\cite{thrampoulidis2018precise} to analyze the convergence properties of (AO). Due to the convergence we are getting from the LLN, applying this lemma enables us to replace the Moreau envelopes with their expected value. Hence, We need to analyze the following optimization,
\begin{align}
\label{eq:AO_final}
\min_{\substack{\sigma, \upsilon\geq 0\\ \alpha\in \mathbb R}}~\max_{\substack{r,\tau \geq 0\\\theta\in \mathbb R}}~ - \frac{\sigma}{2\tau}-\frac{\sigma\tau r^2}{2\delta}+\frac{r}{2\upsilon} -\frac{\kappa^2\alpha^2}{2\sigma\tau}+\kappa^2\alpha\theta-\frac{\kappa^2\sigma\tau\theta^2}{2}-\frac{1}{4r\upsilon}-\kappa^2\alpha\mathbb E_Z[\rho^{\prime\prime}(\kappa Z)] \nonumber\\
+\mathbb E\big[M_{\lambda \tilde f(\cdot)}\big(\sigma\tau(\frac{r}{\sqrt \delta} Z + \theta \beta), \sigma\tau\big)\big]+\mathbb E\big[M_{\rho(\cdot)}\big(\kappa \alpha Z_1 +\sigma Z_2 + \frac{1}{r\upsilon}Ber(\kappa Z_1), \frac{1}{r\upsilon} \big)\big]~.
\end{align}
\subsection{Finding the optimality condition of the scalar optimization}
In this section, we conclude the proof of the main result of the paper. For this, we need to show that the optimizer of the optimization~\eqref{eq:AO_final} can be found by solving the nonlinear system of equations~\eqref{eq:nonlinsys}. Let $C(\alpha, \sigma. r, \tau, \upsilon, \theta) $ denote the objective function in~\eqref{eq:AO_final}. We want to find the optimer of $C(\cdot)$, i.e., the point $(\alpha^\star, \sigma^\star, r^\star, \tau^\star, \upsilon^\star, \theta^\star)$. Since the objective function is smooth, when the optimal values are all non-zero, they should satisfy the first order optimality condition, i.e., 
\begin{equation}
\label{eq:first-order}
\nabla C = \mathbf 0~.
\end{equation}
We will show that the~\eqref{eq:first-order} would simplify to our system of nonlinear equations. We start by putting the derivative w.r.t. $\theta$ equal to zero. We have the following,
\begin{equation}
\label{eq:der_theta_1}
\frac{\partial C}{\partial \theta} = 0 \Rightarrow \kappa^2\alpha - \kappa^2\sigma\tau\theta + \frac{1}{p}\mathbb  E\big[{\bb^{*}}^T\big( \tau \sigma (\frac{r}{\sqrt \delta}\mathbf g + \theta \bb^{*})-\text{Prox}_{\sigma\tau\lambda f(\cdot)}(\sigma\tau(\frac{r}{\sqrt \delta}\mathbf g + \theta \bb^{*}))\big)\big] = 0~,
\end{equation}
where we used Lemma~\ref{lem:der_moreau} for taking the derivative of the Moreau envelope,  $M_{\lambda f(\cdot)}$. We can simplify~\eqref{eq:der_theta_1} and reqrite it as follows,
\begin{equation}
    \kappa^2\alpha = \frac{1}{p}\mathbb E\big[{\bb^{*}}^T\text{Prox}_{\sigma\tau\lambda f(\cdot)}(\sigma\tau(\frac{r}{\sqrt \delta}\mathbf g + \theta \bb^{*}))\big)\big]~.
\end{equation}
Next, we take derivative of the objective  function $C(\cdot)$ w.r.t. $r$ and $\upsilon$ and put that equal to zero. We state the following lemma which will be exploited in taking the derivatives.
\begin{lem}
\label{lem:der_F}
For fixed values of $\kappa,\alpha,\text{ and }\sigma$, let the function $F:\mathbb R_{+}\rightarrow \mathbb R$ be defined as follows,
\begin{equation}
    F(\gamma)  = -\frac{1}{4}\gamma+\mathbb E_{Z_1,Z_2}\big[M_{\rho(\cdot)}\big(\kappa\alpha Z_1+\sigma Z_2+\gamma Ber(\rho'(\kappa Z_1)), \gamma\big)\big]
\end{equation},
then the derivative of $F(\cdot)$ would be as follows:
\begin{equation}
\label{eq:der_F}
    F'(\gamma) =-\frac{1}{\gamma^2}\mathbb E\big[{\rho'(-\kappa Z_1)}\big(\kappa \alpha Z_1+ \sigma Z_2-\text{Prox}_{{\gamma\rho(\cdot)}}(\kappa \alpha Z_1+ \sigma Z_2)\big)^2\big]~.
\end{equation}
\end{lem}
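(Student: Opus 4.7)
My plan is to differentiate under the expectation, applying Lemma~\ref{lem:der_moreau} via the chain rule. Write $u := \kappa\alpha Z_1 + \sigma Z_2$ and $Y := \mathrm{Ber}(\rho'(\kappa Z_1))$, so the Moreau envelope inside $F$ depends on $\gamma$ through both its first argument $v(\gamma) := u + \gamma Y$ and its smoothing parameter. Combining the two partials from Lemma~\ref{lem:der_moreau} gives
\begin{equation*}
\frac{d}{d\gamma}M_{\rho(\cdot)}(v(\gamma),\gamma) = \frac{Y}{\gamma}(v-\text{Prox}_{\gamma\rho(\cdot)}(v)) - \frac{1}{2\gamma^2}(v-\text{Prox}_{\gamma\rho(\cdot)}(v))^2,
\end{equation*}
so $F'(\gamma) = -\tfrac14 + \mathbb{E}$ of the right-hand side. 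Differentiation under the expectation is justified by the uniform bound $v-\text{Prox}_{\gamma\rho(\cdot)}(v) = \gamma\rho'(\text{Prox}_{\gamma\rho(\cdot)}(v)) \in [0,\gamma]$, which follows from the first-order optimality equation of the proximal operator and $\rho'\in(0,1)$.

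I would then condition on $(Z_1,Z_2)$ to integrate out the Bernoulli $Y$. Setting $P := \text{Prox}_{\gamma\rho(\cdot)}(u)$ and using $u-P = \gamma\rho'(P)$, the $Y=0$ branch (conditional probability $\rho'(-\kappa Z_1)$, with $v=u$) contributes exactly $-\tfrac{1}{2\gamma^2}\mathbb{E}[\rho'(-\kappa Z_1)(u-P)^2]$, already in the target form. For the $Y=1$ branch I would apply Lemma~\ref{lem:prox} to rewrite $\text{Prox}_{\gamma\rho(\cdot)}(u+\gamma) = -\text{Prox}_{\gamma\rho(\cdot)}(-u)$, and then perform the change of variables $(Z_1,Z_2)\mapsto(-Z_1,-Z_2)$ inside the Gaussian integral. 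This symmetry sends $u\mapsto -u$ (hence $\text{Prox}_{\gamma\rho(\cdot)}(-u)\mapsto P$) and sends $\rho'(\kappa Z_1)\mapsto \rho'(-\kappa Z_1)$, so the $Y=1$ piece is recast as an expectation involving the \emph{same} proximal $P$ and the \emph{same} weight $\rho'(-\kappa Z_1)$ as the $Y=0$ piece.

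The final step is algebraic. Using $\rho'(-x)=1-\rho'(x)$ and $v-\text{Prox}_{\gamma\rho(\cdot)}(v) = \gamma\rho'(\text{Prox}_{\gamma\rho(\cdot)}(v))$, the $Y=1$ integrand collapses to $\rho'(-\kappa Z_1)\cdot\tfrac12(1-\rho'(P)^2)$. Splitting this via $\mathbb{E}[\rho'(-\kappa Z_1)] = 1/2$ (which follows from $Z_1\stackrel{d}{=}-Z_1$ together with $\rho'(x)+\rho'(-x)=1$) yields a constant $+\tfrac14$ that exactly cancels the $-\tfrac14$ from the $-\gamma/4$ term of $F$, plus a second $-\tfrac{1}{2\gamma^2}\mathbb{E}[\rho'(-\kappa Z_1)(u-P)^2]$ (since $\rho'(P)^2=(u-P)^2/\gamma^2$) that combines with the $Y=0$ piece to give the claimed $F'(\gamma) = -\tfrac{1}{\gamma^2}\mathbb{E}[\rho'(-\kappa Z_1)(u-P)^2]$.

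The main obstacle I anticipate is the bookkeeping for the $Y=1$ branch: recognising that Lemma~\ref{lem:prox} combined with the Gaussian sign-flip symmetry is precisely what folds both Bernoulli branches onto a common proximal $P$, and that the identity $\rho'(-P)-\tfrac12\rho'(-P)^2 = \tfrac12(1-\rho'(P)^2)$ is what makes the $-\tfrac14$ vanish cleanly. The differentiation-under-the-expectation step is routine; the real work is in lining up the two branches so that the cancellations become manifest.
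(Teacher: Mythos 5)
Your proposal is correct and follows essentially the same route as the paper's proof: differentiate the Moreau envelope via Lemma~\ref{lem:der_moreau} with the chain rule, condition on the Bernoulli to split into the $Y=0$ and $Y=1$ branches, and invoke Lemma~\ref{lem:prox} to handle $\text{Prox}_{\gamma\rho(\cdot)}(u+\gamma)$. In fact your write-up carries the argument further than the paper does — the Gaussian sign-flip $(Z_1,Z_2)\mapsto(-Z_1,-Z_2)$, the identity $\rho'(-P)-\tfrac12\rho'(-P)^2=\tfrac12\big(1-\rho'(P)^2\big)$, and the cancellation of the $-\tfrac14$ against $\tfrac12\mathbb E[\rho'(-\kappa Z_1)]$ are exactly the steps the paper leaves implicit after its equation~\eqref{eq:lem_prox_app}, and you have verified them correctly.
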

\begin{proof}
We have,
\begin{equation}
F'(\gamma) = -\frac{1}{4} + \frac{d}{d\gamma}~\mathbb E_{Z_1,Z_2}\big[M_{\rho(\cdot)}(\kappa\alpha^\star Z_1+\sigma^\star Z_2+\gamma Ber(\rho'(\kappa Z_1)), \gamma)\big]
\end{equation}
In order to compute the last derivative we exploit Lemma~\ref{lem:der_moreau}. We have,
\begin{equation}
\label{eq:gamma_derivative_1}
\begin{aligned}
& \frac{d}{d\gamma}~\mathbb E\big[M_{\rho(\cdot)}(\kappa\alpha Z_1+\sigma Z_2+\gamma Ber(\rho'(\kappa Z_1)), \gamma)\big] =- \mathbb  E\big[\frac{\rho'(-\kappa Z_1)}{2\gamma^2}\big(\kappa \alpha Z_1+ \sigma Z_2-\text{Prox}_{\gamma{\rho(\cdot)}}(\kappa \alpha^\star Z_1+ \sigma^\star Z_2)\big)^2\big]\\
&\qquad\qquad\qquad\qquad\qquad\qquad\qquad\qquad\quad- \mathbb E\big[\frac{\rho'(\kappa Z_1)}{2\gamma^2}\big(\kappa \alpha Z_1+ \sigma Z_2+\gamma-\text{Prox}_{{\gamma\rho(\cdot)}}(\kappa \alpha Z_1+ \sigma Z_2+\gamma)\big)^2\big]\\
&\qquad\qquad\qquad\qquad\qquad\qquad\qquad\qquad\quad + \mathbb E\big[\frac{\rho'(\kappa Z_1)}{\gamma}\big(\kappa \alpha Z_1+ \sigma Z_2+\gamma - \text{Prox}_{{\gamma\rho(\cdot)}}(\kappa \alpha Z_1+ \sigma Z_2+\gamma)\big)\big]~,
\end{aligned}
\end{equation}
where we used the fact that for $x\in \mathbb R$, $\rho'(-x) = 1-\rho'(x)  $.
To derive~\eqref{eq:der_F} we appeal to the result of Lemma~\ref{lem:prox} which gives the following identity,
\begin{equation}
    \label{eq:lem_prox_app}
    \text{Prox}_{{\gamma\rho(\cdot)}}(\kappa \alpha Z_1+ \sigma Z_2+\gamma) = -\text{Prox}_{{\gamma\rho(\cdot)}}(-\kappa \alpha Z_1- \sigma Z_2)~.
\end{equation}
\end{proof}
Next, we use the result of Lemma~\ref{lem:der_F} to find the optimality conditions with respect to $r$ and $\upsilon$. We have,
\begin{equation}
\label{eq:der_tau_r}
    \begin{cases}
    \frac{\partial}{\partial r}C = 0 \Rightarrow -\frac{\sigma\tau r}{\delta} + \frac{1}{2\upsilon} -\frac{1}{\upsilon r^2}F'(\frac{1}{\upsilon r}) + \frac{1}{p}\mathbb E\big[\frac{\mathbf g^T}{\sqrt\delta}\big(\frac{\sigma \tau r}{\sqrt \delta}\mathbf g-\text{Prox}_{\sigma\tau\lambda f(\cdot)}(\sigma\tau(\frac{r}{\sqrt \delta}\mathbf g + \theta \bb^{*}))\big)\big] = 0~,\\
    \frac{\partial}{\partial \upsilon}C = 0 \Rightarrow \frac{-r}{2\upsilon ^2} - \frac{1}{r\upsilon^2}F'(\frac{1}{r\upsilon})= 0~.
    \end{cases}
\end{equation}

In order to simplify the equations, we define a new variable $\gamma := \frac{1}{r\upsilon}$. We can rewrite the equations~\eqref{eq:der_tau_r} as follows,
\begin{equation}
    \begin{cases}
    \gamma = \frac{1}{p}\mathbb E\big[\frac{\mathbf g^T}{r \sqrt \delta}\text{Prox}_{\sigma\tau\lambda f(\cdot)}\big(\sigma\tau(\frac{r}{\sqrt \delta}\mathbf g + \theta \bb^{*})\big)\big]~,\\
    \gamma^2 = \mathbb E\big[\frac{2\rho'(-\kappa Z_1)}{r^2}\big(\kappa \alpha Z_1+ \sigma Z_2-\text{Prox}_{{\gamma\rho(\cdot)}}(\kappa \alpha Z_1+ \sigma Z_2)\big)^2\big]~.
    \end{cases}
\end{equation}
So, far we have shown that three of the optimality conditions are the same as the nonlinear equations $1$,$2$, and $5$ in~\eqref{eq:nonlinsys}. Next, we take the derivative w.r.t. $\tau$. We have,
\begin{equation}
\label{eq:der_tau}
    \frac{\partial}{\partial \tau} C = 0 \Rightarrow \frac{\sigma}{2\tau^2} - \frac{\sigma r^2}{2\delta} + \frac{\kappa^2\alpha^2}{2\sigma \tau^2}-\frac{\kappa^2\sigma\theta^2}{2} + \frac{1}{p}~\frac{\partial}{\partial \tau}\mathbb E[M_{\lambda f(\cdot)}\big(\sigma\tau(\frac{r}{\sqrt \delta}\mathbf g + \theta \bb^{*}), \sigma\tau\big)] = 0~.
\end{equation}
The derivative of the expected Moreau envelope can be computed as follows,
\begin{equation}
\label{eq:der_Mor_tau}
    \frac{1}{p}~\frac{\partial}{\partial \tau}\mathbb E[M_{\lambda f(\cdot)}\big(\sigma\tau(\frac{r}{\sqrt \delta}\mathbf g + \theta \bb^{*}), \sigma\tau\big)] = \frac{\sigma}{2}(\frac{r^2}{\delta}+\theta^2\kappa) - \frac{1}{2\sigma \tau^2}\mathbb E\big[||\text{Prox}_{\sigma\tau\lambda f(\cdot)}\big(\sigma\tau(\frac{r}{\sqrt \delta}\mathbf g + \theta \bb^{*})\big){||}_2^{2}\big]~.
\end{equation}
Replacing~\eqref{eq:der_Mor_tau} in~\eqref{eq:der_tau} would result in,
\begin{equation}
    \label{eq:der_tau_final}
    (\kappa\alpha)^2+\sigma^2 = \mathbb E\big[|\text{Prox}_{\sigma\tau\lambda f(\cdot)}\big(\sigma\tau(\frac{r}{\sqrt \delta}\mathbf g + \theta \bb^{*})\big){||}_2^{2}\big]~.
\end{equation}
which is the third equation in the nonlinear system~\eqref{eq:nonlinsys}. Next, putting the derivative w.r.t. $\sigma$ equal zero gives the following,
\begin{equation}
    \label{eq:der_sigma}
    -\frac{1}{2\tau}-\frac{\tau r^2}{2\delta}+\frac{\kappa^2\alpha^2}{2\sigma^2\tau} -\frac{\kappa^2\tau\theta^2}{2}+ \frac{1}{p}~\frac{\partial}{\partial \tau}\mathbb E[M_{\lambda f(\cdot)}\big(\sigma\tau(\frac{r}{\sqrt \delta}\mathbf g + \theta \bb^{*}), \sigma\tau\big)] + \frac{\partial}{\partial \sigma}\mathbb E\big[M_{\rho(\cdot)}\big(\kappa \alpha Z_1+\sigma Z_2 + \gamma Ber(\kappa Z_1),\gamma\big)\big]=0~.
\end{equation}
We can compute the partial derivative of the expected Moreau envelopes as follows,
\begin{equation}
\label{eq:der_Mor_sigma}
    \frac{1}{p}~\frac{\partial}{\partial \sigma}\mathbb E[M_{\lambda f(\cdot)}\big(\sigma\tau(\frac{r}{\sqrt \delta}\mathbf g + \theta \bb^{*}), \sigma\tau\big)] = \frac{\tau}{2}(\frac{r^2}{\delta}+\theta^2\kappa) - \frac{1}{2\sigma^2 \tau}\mathbb E\big[||\text{Prox}_{\sigma\tau\lambda f(\cdot)}\big(\sigma\tau(\frac{r}{\sqrt \delta}\mathbf g + \theta \bb^{*})\big){||}_2^{2}\big]~,
\end{equation}
and,
\begin{equation}
    \label{eq:der_MOr_rho_sigma}
    \begin{aligned}
    &&\frac{\partial}{\partial \sigma}\mathbb E\big[M_{\rho(\cdot)}\big(\kappa \alpha Z_1+\sigma Z_2 + \gamma Ber(\kappa Z_1),\gamma\big)\big] &= \frac{\sigma}{\gamma} - \frac{2}{\gamma}\mathbb E\big[Z_2\rho'(-\kappa Z_1)\text{Prox}_{\gamma\rho(\cdot)}\big(\kappa\alpha Z_1+\sigma Z_2\big)\big]~,\\
    &&&=\frac{\sigma}{\gamma} \big(1- 2\mathbb E\big[\frac{\rho'(-\kappa Z_1)}{1+\gamma\rho''\big(\text{Prox}_{\gamma \rho(\cdot)}(\kappa \alpha Z_1+\sigma Z_2)\big)}\big]\big)~.
    \end{aligned}
\end{equation}
To derive the last equality, we used Lemma~\ref{lem:prox_2} and Lemma~\ref{lem:Stein}. Replacing~\eqref{eq:der_Mor_sigma}, and ~\eqref{eq:der_MOr_rho_sigma} in~\eqref{eq:der_sigma} gives,
\begin{equation}
\label{eq:der_sigma_final}
    1-\frac{\gamma}{\tau\sigma} = \mathbb E\big[\frac{2\rho'(-\kappa Z_1)}{1+\gamma \rho''\big(\text{Prox}_{\gamma \rho(\cdot)}(\kappa \alpha Z_1+\sigma Z_2)\big)}\big]~.
\end{equation}
As the last step, we take the derivative with respect to $\alpha$ in order to derive the fourth equation in the nonlinear system~\eqref{eq:nonlinsys}. We have,
\begin{equation}
    \label{eq:der_alpha}
    \frac{\partial C}{\partial \alpha} = \frac{-\kappa^2 \alpha}{\sigma \tau} + \kappa^2 \theta -\kappa^2 \mathbb E[\rho''(\kappa Z)] + \frac{\partial}{\partial \alpha}\mathbb E[M_{\rho(\cdot)}\big(\kappa\alpha Z_1 +\sigma Z_2 + \gamma Ber(\rho'(\kappa Z_1)), \gamma\big)]=0~.
\end{equation}To simplify this equation we write,
\begin{equation}
\label{eq:der_Mor_alpha}
    -\kappa^2 \mathbb E[\rho''(\kappa Z)] + \frac{\partial}{\partial \alpha}\mathbb E[M_{\rho(\cdot)}\big(\kappa\alpha Z_1 +\sigma Z_2 + \gamma Ber(\rho'(\kappa Z_1)), \gamma\big)] = \frac{\kappa^2\alpha}{\gamma} - 2\mathbb E\big[\frac{\kappa}{\gamma} Z_1\rho'(-\kappa Z_1)\text{Prox}_{\gamma \rho(\cdot)}\big(\kappa \alpha Z_1+\sigma Z_2\big)\big]
\end{equation}
Replacing~\eqref{eq:der_Mor_alpha} in~\eqref{eq:der_alpha} would result,
\begin{equation}
    \label{eq:der_alpha_2}
    \frac{\gamma \kappa}{2}(\theta -\frac{\alpha}{\sigma \tau}) +\frac{\kappa \alpha}{2}= \mathbb E\big[Z_1\rho'(-\kappa Z_1)\text{Prox}_{\gamma \rho(\cdot)}\big(\kappa \alpha Z_1+\sigma Z_2\big)\big]~.
\end{equation}
Using Stein's lemma, we can rewrite the RHS as,
\begin{equation}
    \label{eq:stein_der_alpha}
    \begin{aligned}
    &&\text{RHS} &= -\mathbb E\big[\kappa \rho''(-\kappa Z_1)\text{Prox}_{\gamma \rho(\cdot)}\big(\kappa \alpha Z_1+\sigma Z_2\big)\big]  + \kappa \alpha\mathbb E[\frac{\rho'(-\kappa Z_1)}{1+\gamma \rho''\big(\text{Prox}_{\gamma \rho(\cdot)}(\kappa \alpha Z_1+\sigma Z_2)\big)}],\\
    &&& = -\mathbb E\big[\kappa \rho''(-\kappa Z_1)\text{Prox}_{\gamma \rho(\cdot)}\big(\kappa \alpha Z_1+\sigma Z_2\big)\big]  + \frac{\kappa\alpha}{2} - \frac{\kappa\alpha\gamma}{2\tau\sigma}~,
    \end{aligned}
\end{equation}
where we exploit~\eqref{eq:der_sigma_final} to derive the last equation. Substituting in~\eqref{eq:der_alpha_2} would give,
\begin{equation}
    \label{eq:der_alpha_final}
    \gamma\theta = -2\mathbb E\big[ \rho''(-\kappa Z_1)\text{Prox}_{\gamma \rho(\cdot)}\big(\kappa \alpha Z_1+\sigma Z_2\big)\big]~.
\end{equation}
Therefore, we have shown that the nonlinear system~\eqref{eq:nonlinsys} is equivalent to the optimality condition in \eqref{eq:AO_final}. 

 Recall in the process of simplifying (AO) in Section~\ref{sec:Analyze_AO}, we introduced the Moreau envelope of $f(\cdot)$ in~\eqref{eq:opt_mu_3}. The optimizer of that Moreau envelope gives the solution of the Auxiliary optimization. Let $(\bar \alpha, \bar \sigma, \bar \gamma, \bar \theta, \bar \tau, \bar r)$ be the unique solution of the nonlinear system. Hence, we can present the solution  of the (AO) in terms of the proximal operator as follows,
\begin{equation}
\label{AO_estimate}
    {\hat \bb}^{AO}_i =  \Gamma(\bb^*_i, Z)=\text{Prox}_{\lambda \bar\sigma \bar \tau\tilde f(\cdot)}\big(\bar\sigma\bar\tau(\bar \theta \bb^*_i+\frac{\bar r}{\sqrt{\delta}}Z)\big),~~~\text{for }~i=1,2,\ldots p.
\end{equation} 
As the last step we want to show the convergence of the locally-Lipschitz function $\Psi(\cdot, \cdot)$. Recall in Section~\ref{sec:find_AO}, in order to apply the CGMT, we have introduced some artificial bounded sets on the optimization variables and state that we can perform the optimization over these sets. Considering the variables belong to those bounded sets, we can state the function $\Psi(\cdot, \cdot)$ is Lipschitz, as constraining a locally-Lipschitz function to a bounded set gives a Lipschitz function. Next, using the strong law of large numbers along with the fact that the entries of $\bb^*$ are i.i.d. and drawn from distribution $\Pi$, we have,
\begin{equation}
\label{eq:conv_beta_hat}
    \frac{1}{p}\sum_{i=1}^p\Psi( {\hat \bb}^{AO}_i, \bb^*_i)\overset{a.s.}\longrightarrow \mathbb E\big[\Psi(\Gamma(\beta, Z),\beta)\big]~,
\end{equation}
where $Z$ is a standard normal random variable and $\beta\sim\Pi$ is independent of $Z$. 

Exploiting the assymptotic convergence of CGMT (Corollary~\ref{cor:CGMT}), we can introduce the set $\mathcal S$ as follows,
\begin{equation}
    \mathcal S = \{\bb\in\mathbb R^p: |\frac{1}{p}\sum_{i=1}^p\Psi( {\bb}, \bb^*_i) - \mathbb E\big[\Psi(\Gamma(\beta, Z),\beta)\big]| >\epsilon\}
\end{equation}

The convergence in~\eqref{eq:conv_beta_hat} would establish that as $p\rightarrow \infty$, ${\hat \bb}^{AO} \in \mathcal S$ with probability approaching $1$. Therefore, as the result of Corollary~\ref{cor:CGMT}, $\hat \bb  = {\hat \bb }^{PO}\in \mathcal S$ with probability approaching $1$. This concludes the proof.

\section{Proof of Theorem~\ref{thm:l2_reg}}
This result can be derived using the result of Theorem~\ref{thm:main}. We just need to show that the parameters $\theta$, $r$, and $\tau$ can be explicitely computed from the first three equations in the nonlinear system~\eqref{eq:nonlinsys}. Recall that we characterize the performance of the RLR in terms of the solution of the following nonlinear equation,
\begin{equation}
\label{eq:nonlinsys_rep}
\begin{cases}
\begin{aligned}
&&\kappa^2 \alpha &= ~\mathbb E\big[\beta~\text{Prox}_{\lambda\sigma\tau \tilde f(\cdot)}\big(\sigma\tau(\theta \beta+\frac{r}{\sqrt{\delta}}Z)\big)\big]~,\\
&&\gamma &= \frac{1}{r\sqrt{\delta}}~\mathbb E\big[Z~\text{Prox}_{\lambda\sigma\tau \tilde f(\cdot)}\big(\sigma\tau(\theta \beta+\frac{r}{\sqrt{\delta}}Z)\big)\big]~,\\
&&\kappa^2\alpha^2 + \sigma^2 &= ~ \mathbb E~\big[\text{Prox}_{\lambda\sigma\tau \tilde f(\cdot)}\big(\sigma\tau(\theta \beta+\frac{r}{\sqrt{\delta}}Z)\big)^2\big]~,\\
&&\gamma^2 &= \frac{2}{r^2}~\mathbb E\big[\rho'(-\kappa Z_1)\big(\kappa\alpha Z_1+\sigma Z_2 -\text{Prox}_{\gamma\rho(\cdot)}(\kappa\alpha Z_1+\sigma Z_2) \big)^2\big]~,\\
&&\theta\gamma&=-2~\mathbb E\big[ \rho''(-\kappa Z_1)\text{Prox}_{\gamma \rho(\cdot)}\big(\kappa \alpha Z_1+\sigma Z_2\big)\big]~,\\
&&1-\frac{\gamma}{\sigma\tau}&=~\mathbb E\big[\frac{2\rho'(-\kappa Z_1)}{1+\gamma \rho''\big(\text{Prox}_{\gamma\rho(\cdot)}(\kappa\alpha Z_1 + \sigma Z_2)\big)}\big]~.
\end{aligned}
\end{cases}
\end{equation}
In the $\ell_2^2$-regularization, we have $\tilde f(\cdot) = \frac{1}{2}(\cdot)^2$, for which the proximal operator can be computed in closed-form, i.e., we have,
\begin{equation}
    \text{Prox}_{t\tilde f}(x) = \frac{x}{1+t}~.
\end{equation}
Replacing in the first equation of~\eqref{eq:nonlinsys_rep} gives,
\begin{equation}
\label{eq:simply_1}
    \begin{aligned}
&&\kappa^2 \alpha &= ~\mathbb E\big[\beta~\text{Prox}_{\lambda\sigma\tau \tilde f(\cdot)}\big(\sigma\tau(\theta \beta+\frac{r}{\sqrt{\delta}}Z)\big)\big]\\
&&&=~\mathbb E\big[\beta \times \frac{\sigma\tau(\theta \beta+\frac{r}{\sqrt{\delta}}Z)}{1+\lambda\sigma\tau}\big] = \frac{\sigma\tau \theta \kappa^2}{1+\lambda\sigma\tau}~.
\end{aligned}
\end{equation}
where we used the fact that $\mathbb E[\beta^2] = \kappa^2$, and $\mathbb E[\beta\cdot Z]=0$. Next, from the second equation in~\eqref{eq:nonlinsys_rep} we have,
\begin{equation}
\label{eq:simply_2}
    \begin{aligned}
&&\gamma &= ~\frac{1}{r\sqrt \delta}\mathbb E\big[Z~\text{Prox}_{\lambda\sigma\tau \tilde f(\cdot)}\big(\sigma\tau(\theta \beta+\frac{r}{\sqrt{\delta}}Z)\big)\big]\\
&&&=~\frac{1}{r\sqrt \delta}\mathbb E\big[Z \times \frac{\sigma\tau(\theta \beta+\frac{r}{\sqrt{\delta}}Z)}{1+\lambda\sigma\tau}\big] = \frac{\sigma\tau}{\delta(1+\lambda\sigma\tau)}~,
\end{aligned}
\end{equation}
and finally from the thrid equation in~\eqref{eq:nonlinsys_rep} we can compute,
\begin{equation}
\label{eq:simply_3}
\begin{aligned}
    &&\kappa^2\alpha^2 +\sigma^2 &= ~\mathbb E\big[\big(~\text{Prox}_{\lambda\sigma\tau \tilde f(\cdot)}\big(\sigma\tau(\theta \beta+\frac{r}{\sqrt{\delta}}Z)\big)\big)^2\big]\\
    &&&=\frac{\sigma^2\tau^2}{(1+\lambda\sigma\tau)^2}(\theta^2\kappa^2+\frac{r^2}{\delta})\\
    &&&=\kappa^2\alpha^2 +\frac{\sigma^2\tau^2r^2}{\delta(1+\lambda\sigma\tau)^2} ~.
\end{aligned}
\end{equation}
We can rewrite the equations~\eqref{eq:simply_1},~\eqref{eq:simply_2}, and~\eqref{eq:simply_3} as follows,
\begin{equation}
\label{eq:simply_4}
    \begin{cases}
    \begin{aligned}
    &&\theta &=\frac{\alpha}{\gamma \delta}~,\\
    &&\tau &=\frac{\delta \gamma}{\sigma\big(1-\lambda \delta \gamma\big)}~,\\
    && r &= \frac{\sigma}{ \gamma \sqrt{\delta}}~.
    \end{aligned}
    \end{cases}
\end{equation}

Replacing the derived expressions in~\eqref{eq:simply_4} for $\theta, r$, and $\tau$ in the last three equations of~\eqref{eq:nonlinsys_rep} would gives the following system of three equations with three unknowns,
\begin{equation}
\label{eq:nonlin_l2_rep}
    \begin{cases}
    \begin{aligned}
    &&\frac{\sigma^2}{2\delta} &= ~\mathbb E\big[\rho'(-\kappa Z_1)\big(\kappa\alpha Z_1+\sigma Z_2 -\text{Prox}_{\gamma\rho(\cdot)}(\kappa\alpha Z_1+\sigma Z_2) \big)^2\big]~,\\
    &&-\frac{\alpha}{2\delta}&=~\mathbb E\big[ \rho''(-\kappa Z_1)\text{Prox}_{\gamma \rho(\cdot)}\big(\kappa \alpha Z_1+\sigma Z_2\big)\big]~,\\
    &&1-\frac{1}{\delta}+\lambda\gamma&=~\mathbb E\big[\frac{2\rho'(-\kappa Z_1)}{1+\gamma \rho''\big(\text{Prox}_{\gamma\rho(\cdot)}(\kappa\alpha Z_1 + \sigma Z_2)\big)}\big]~.
    \end{aligned}
    \end{cases}
\end{equation}
This concludes the proof of Theorem~\ref{thm:l2_reg}.

\end{document}